\documentclass[letterpaper,10pt]{article}
\usepackage[utf8]{inputenc}

\PassOptionsToPackage{compatibility=false}{caption}
\PassOptionsToPackage{hypertexnames=false}{hyperref}

\usepackage[preprint]{probml}
\ShortHeadings{Variational Kernel Design for Internal Noise}

\usepackage{microtype}
\usepackage{graphicx}
\usepackage{algorithm}
\usepackage[noend]{algpseudocode}
\usepackage{placeins}
\usepackage{enumitem}
\usepackage{stmaryrd}
\usepackage{mathtools}

\numberwithin{theorem}{section}

\newcommand{\E}{\mathbb{E}}
\newcommand{\Var}{\mathrm{Var}}
\newcommand{\Cov}{\mathrm{Cov}}
\newcommand{\R}{\mathbb{R}}

\newcommand{\KL}{\mathrm{KL}}
\newcommand{\tr}{\mathrm{Tr}}
\newcommand{\eps}{\varepsilon}

\newcommand{\GCh}{{\normalfont\textsc{GCh}}}

\begin{document}

\title{Variational Kernel Design for Internal Noise:\\
Gaussian Chaos Noise, Representation Compatibility, and Reliable Deep Learning
}

\author[1,2,3,$\dagger$]{Ziran Liu}

\affil[1]{Shanghai Institute for Mathematics and Interdisciplinary Sciences (SIMIS), Shanghai 200433, China}
\affil[2]{Research Institute of Intelligent Complex Systems, Fudan University, Shanghai 200433, China}
\affil[3]{Institute for Intelligent Computing@SJTU, Shanghai 200433, China}
\affil[$\dagger$]{Correspondence to \url{zliu@simis.cn}}

\maketitle

\begin{abstract}
Internal noise in deep networks is usually inherited from heuristics such as dropout, hard masking, or additive perturbation. We ask two questions: what correlation geometry should internal noise have, and is the implemented perturbation compatible with the representations it acts on? We answer these questions through Variational Kernel Design (VKD), a framework in which a noise mechanism is specified by a law family, a correlation kernel, and an injection operator, and is derived from learning desiderata. In a solved spatial subfamily, a quadratic maximum-entropy principle over latent log-fields yields a Gaussian optimizer with precision given by the Dirichlet Laplacian, so the induced geometry is the Dirichlet Green kernel. Wick normalization then gives a canonical positive mean-one gate, Gaussian Chaos Noise (\GCh). For the sample-wise gate used in practice, we prove exact Gaussian control of pairwise log-ratio deformation, margin-sensitive ranking stability, and an exact expected intrinsic roughness budget; hard binary masks instead induce singular or coherence-amplified distortions on positive coherent representations. On ImageNet and ImageNet-C, \GCh{} consistently improves calibration and under shift also improves NLL at competitive accuracy.
\end{abstract}

\begin{keywords}
Noise Design, Deep Learning Reliability, Calibration, Distribution Shift, Gaussian Multiplicative Chaos
\end{keywords}

\section{Introduction}
\label{sec:intro}

Noise injection is one of the most widely used yet least principled components of deep learning. It appears as additive perturbation, stochastic gating, masking, augmentation, corruption-aware training, and uncertainty regularization, and it is routinely used to improve generalization, calibration, and robustness. Yet one central design choice is usually left heuristic: \emph{what structure should the noise have}? In most pipelines, that choice is inherited from familiar templates such as i.i.d.\ dropout \citep{srivastava2014dropout}, stochastic depth \citep{huang2016stochasticdepth}, or hard spatial masking \citep{ghiasi2018dropblock}, rather than derived from the geometry of the representation or the objective of the learner.

This raises a more structural question:
\begin{quote}
\emph{If internal noise is to be used as part of representation learning, what aspects of that noise should be derived from first principles rather than fixed by convention?}
\end{quote}

Our answer is to treat internal noise as a \emph{design object}. We call the resulting program \emph{Variational Kernel Design} (VKD). In VKD, a noise mechanism is specified by a triple
\[
N=(\mathcal{F},K,\mathcal{T}),
\]
consisting of a law family, a correlation kernel, and an injection operator. A realization map then turns a sampled latent field into an implemented perturbation. The mechanism is therefore not just ``a distribution''; it is a compositional system that separates \emph{what is sampled}, \emph{what geometry it must respect}, and \emph{where and how it is deployed}.

This viewpoint reveals that there are really two linked questions. The first is a \emph{design question}: once locality, smoothness, and mean-preserving positivity are encoded as operator-level constraints, what perturbation geometry is canonically induced? The second is a \emph{compatibility question}: once such a mechanism is implemented in a deep network, what does it actually do to the geometry of positive semantic representations, and how does that differ from hard masking? The paper is built around this two-layer split. The first layer derives the mechanism. The second studies the induced action of the implemented perturbation on a target representation regime.

The design layer leads to a solved quadratic VKD program. In the spatial setting of this paper, a maximum-entropy log-field under a Dirichlet-energy budget is Gaussian with precision $\beta L_U$, hence covariance $\beta^{-1}L_U^{-1}$. In other words, within the chosen local quadratic design class, the Dirichlet Green kernel is not an additional modeling choice; it is the inverse operator forced by the constraints. Exponentiating that field with exact Wick normalization yields a positive mean-one multiplicative gate, which we call \GCh{}.

The compatibility layer is where the practical distinction emerges. For the sample-wise gate actually used in our experiments, we prove exact Gaussian control of pairwise log-ratio deformations, explicit margin-sensitive ranking stability, and an exact expected intrinsic roughness budget. For hard binary masks, we prove a qualitatively different behavior: incompatibility with finite log-ratio geometry, a margin-blind ranking law for inverted dropout, and a coherence-sensitive distortion term whose relative size diverges as the underlying representation becomes increasingly smooth. This is the rigorous form of the informal claim that smooth positive multiplicative perturbations are better matched to coherent late-stage semantics than hard deletion.

A key theme throughout the paper is that these two layers belong together. The contribution is not just that a Gaussian field can be derived from a quadratic maximum-entropy problem---that isolated fact is classical. The contribution is that the variational solution is used as an operator-level design map for training-time noise, \emph{and then} analyzed as an implemented mechanism acting on coherent positive evidence maps. In short: first derive the geometry; then ask whether the realized perturbation is compatible with the representation regime of interest.

\paragraph{Scope of the theoretical claims.}
The paper does \emph{not} claim that deeper is always better, or that hard masking is universally inferior on every architecture, layer, or objective. The claims are conditional and operational: when a layer carries positive region-level or token-level evidence and becomes increasingly coherent in the late-semantic sense, the mathematically relevant quantities are relative log-ratios, ranking stability, and aggregate geometric roughness. In that regime, we show that the implemented \GCh{} gate yields finite, margin-aware Gaussian deformations, whereas hard binary masks yield singular or coherence-amplified distortions.

This perspective yields both a principled mechanism and a practical prediction. If later layers encode increasingly decisive relative evidence between regions or tokens while also becoming more spatially coherent, then a margin-aware smooth multiplicative gate should remain compatible with those representations, whereas hard masking should become increasingly mismatched. Our experiments are designed to test exactly this distinction. On clean ImageNet, \GCh{} improves calibration substantially, and on the selected 7-corruption ImageNet-C evaluation it improves both ECE and NLL while maintaining competitive accuracy. It also remains effective in late-stage injection settings where hard masking can degrade clean calibration.

\paragraph{Contributions.}
Our contributions are as follows.
\begin{itemize}[leftmargin=1.4em]
  \item \textbf{A framework view of internal noise.} We formulate internal noise injection as a compositional design problem and introduce VKD, in which a mechanism is derived from learning-motivated constraints rather than selected from a fixed menu of perturbations.
  \item \textbf{A two-layer theory: design and compatibility.} We separate a mechanism-design layer from a representation-compatibility layer, making explicit the distinction between what is derived from first principles, what is realized in implementation, and what is subsequently measured on a target representation regime.
  \item \textbf{A solved quadratic MaxEnt design program.} We state the admissible class of centered log-field laws explicitly, solve the resulting finite-dimensional variational problem in closed form, and derive an entropy-gap identity certifying uniqueness of the optimizer.
  \item \textbf{Operator-forced kernel geometry.} For spatial log-fields with a Dirichlet-energy budget and gauge fixing, the optimizer is Gaussian with covariance proportional to the Dirichlet Green kernel. More generally, replacing the quadratic operator replaces the induced kernel by its inverse.
  \item \textbf{A canonical exact gate and an implementation-aware framework.} Exponentiating the MaxEnt log-field with Wick normalization yields \GCh{}, a positive mean-one multiplicative gate with explicit multi-point moments; once the operator and budget are fixed, the exact gate becomes an effectively one-parameter family through $\tau=\gamma^2/\beta$. We also make explicit the split between the canonical exact gate and the sample-wise implementation used in practice.
  \item \textbf{Representation compatibility versus hard-mask mismatch.} For the sample-wise gate used in the experiments, we prove exact Gaussian control of pairwise log-ratios, margin-sensitive ranking stability, and an exact expected intrinsic roughness budget. For hard binary masking, we prove incompatibility with finite log-ratio geometry, a margin-blind ranking law for inverted dropout, an immediate loss-of-perfect-coherence result in expectation on perfectly coherent maps, and a late-stage mismatch theorem in the coherent-representation regime.
  \item \textbf{Empirical validation in the predicted late-stage regime.} On clean ImageNet, a selected 7-corruption ImageNet-C evaluation, Swin-T, and a fine-grained Oxford-IIIT Pets pilot, \GCh{} improves calibration and, under shift, also improves NLL, all at competitive accuracy. Controlled ablations show the importance of correlation, positivity, and injection depth.
\end{itemize}

\paragraph{A practical way to read the paper.}
The variational results explain \emph{where the kernel comes from}. The compatibility results explain \emph{why the resulting implemented gate behaves differently from binary masking on semantic representations}. The experiments then test that distinction precisely in the late-stage regime where the mismatch should matter most.

\paragraph{Roadmap.}
Section~\ref{sec:related} reviews stochastic regularization, calibration, and robustness under shift. Section~\ref{sec:framework} presents VKD as a compositional design system and situates the paper's solved instance inside that framework. Sections~\ref{sec:method}--\ref{sec:gch} develop the Dirichlet log-field construction, the quadratic MaxEnt theorem, and the exact and implemented \GCh{} gates. Section~\ref{sec:geometry_theory} gives the representation-compatibility analysis, and Section~\ref{sec:experiments} tests the resulting predictions empirically.

\paragraph{Paper in one sentence.}
We derive the noise geometry from first principles and then show that the resulting implemented smooth positive gate preserves finite, margin-aware relative geometry exactly in the regime where hard masking becomes singular or coherence-amplified.

\paragraph{What is classical and what is new.}
The isolated fact that quadratic maximum entropy yields a Gaussian law is classical. The contribution here is the \emph{use} of that principle as an operator-level design map for training-time noise, together with the second layer of theory that is specific to this paper: exact representation-compatibility results for the implemented sample-wise gate and exact incompatibility results for hard binary masks on coherent positive semantic representations. Put differently, the variational theorem identifies the canonical kernel inside a chosen design class, and the later compatibility theorems explain why that designed mechanism behaves differently from masking in deep networks.

\section{Related Work}
\label{sec:related}

\paragraph{Noise injection and regularization in deep networks.}
Small additive noise is classically linked to Tikhonov-style regularization \citep{bishop1995training}. Dropout injects i.i.d.\ Bernoulli gating \citep{srivastava2014dropout}; stochastic depth drops residual branches \citep{huang2016stochasticdepth} and is extended to Transformers via LayerDrop \citep{fan2019layerdrop}; ShakeDrop perturbs residual branches with randomized coefficients \citep{yamada2018shakedrop}. Spatial occlusion methods such as Cutout and DropBlock impose structured hard masking on feature maps \citep{devries2017cutout,ghiasi2018dropblock}, while sample-level mixing methods such as Mixup and CutMix inject stochasticity at the data level \citep{zhang2018mixup,yun2019cutmix}. In vision transformers, PatchDropout removes input patches and changes token topology \citep{liu2023patchdropout}. A common limitation is that the correlation structure of the noise is usually fixed a priori and often assumes spatial independence or hard discontinuities, which can mismatch late semantic representations.

\paragraph{Calibration and reliability under shift.}
Miscalibration is widespread in modern neural networks, and temperature scaling remains a strong post-hoc baseline \citep{guo2017calibration}. Nonparametric alternatives include BBQ \citep{naeini2015bbq}, while Dirichlet calibration extends beyond a single temperature parameter \citep{kull2019dirichlet}. Dropout admits an approximate Bayesian interpretation \citep{gal2016dropoutbayes}, and deep ensembles remain a strong uncertainty baseline \citep{lakshminarayanan2017ensembles}. Under distribution shift, calibration can deteriorate substantially \citep{ovadia2019trust}, and recent work emphasizes that calibration depends strongly on architecture and training recipe \citep{minderer2021revisiting}. Label smoothing can help but is context dependent \citep{muller2019labelsmoothing}. These findings motivate methods that improve NLL and ECE directly during representation learning rather than relying only on post-hoc correction.

\paragraph{Robustness to corruptions and distribution shift.}
For worst-case robustness, adversarial training and TRADES formalize the robustness--accuracy trade-off \citep{madry2018towards,zhang2019trades}. For average-case corruptions, ImageNet-C/P provide standardized benchmarks \citep{hendrycks2019imagenetc}; subsequent work has also emphasized that performance on synthetic corruptions does not perfectly transfer to natural shifts \citep{taori2020measuring}, and broader OOD suites reveal substantial heterogeneity across shift types \citep{hendrycks2021manyfaces}. Simple augmentation policies such as RandAugment and AugMix improve corruption robustness and uncertainty with low overhead \citep{cubuk2020randaugment,hendrycks2020augmix}; properly tuned Gaussian or speckle noise can also be effective \citep{rusak2020simple}. Noisy Student further demonstrates the power of strong stochastic regularization in large-scale training \citep{xie2020noisystudent}. Our focus is complementary: rather than designing perturbations at the input level, we derive an \emph{internal} spatial noise mechanism whose correlation structure follows from explicit desiderata.

\section{Variational Kernel Design as a Compositional Design System}
\label{sec:framework}

We treat internal noise not as a fixed perturbation template but as a mechanism to be derived from learning desiderata. The role of Variational Kernel Design (VKD) is to map a collection of task-level constraints to a stochastic mechanism and then to analyze how that mechanism acts on a target representation regime. This viewpoint separates two layers that are often conflated in practice: a \emph{mechanism-design layer}, which specifies what latent object is sampled, what geometry it must respect, and where it is injected, and a \emph{compatibility layer}, which studies what geometric quantities the deployed perturbation preserves or distorts on the representations actually used by the network.

The benefit of this separation is conceptual as well as practical. It makes clear which parts of the construction are derived from first principles, which parts are implementation choices, and which parts are properties of the resulting perturbation on a given representation regime. In particular, VKD is not a menu of named noises; it is a compositional system for deriving, realizing, deploying, and analyzing an internal perturbation mechanism.

\subsection{Mechanism space: VKD as a design system}

Let $\Omega$ denote a perturbation domain and let $\mathcal H$ denote a feature space. A VKD mechanism is specified by a triple
\[
N=(\mathcal F,K,\mathcal T),
\]
whose three components encode complementary axes of design.

\begin{definition}[VKD mechanism]
A \emph{VKD mechanism} on $(\Omega,\mathcal H)$ is a triple
\[
N=(\mathcal F,K,\mathcal T),
\]
where:
\begin{enumerate}[label=\textbf{(\roman*)},leftmargin=1.4em]
    \item $\mathcal F$ is a family of laws on latent fields $\psi\in\R^\Omega$;
    \item $K$ is a positive semidefinite kernel on $\Omega\times\Omega$ encoding the intended second-order geometry;
    \item $\mathcal T$ is an injection operator that deploys a realized perturbation inside the model.
\end{enumerate}
\end{definition}

The three components play distinct roles. The family $\mathcal F$ determines what latent object is sampled; the kernel $K$ encodes how that object is spatially correlated; and the operator $\mathcal T$ determines where and how the realized perturbation acts on the network. In this way, VKD separates \emph{sampling}, \emph{geometry}, and \emph{deployment}.

To make the construction operational, we introduce a realization map
\[
\ell:\R^\Omega\to (0,\infty)^\Omega,
\]
which turns a latent field $\psi$ into a positive gate $\xi=\ell(\psi)$. The deployed perturbation is then
\[
\widetilde h = \mathcal T(h;\xi),
\qquad
\psi\sim \mathcal F.
\]
Thus the mechanism pipeline has the schematic form
\[
(\mathcal F,K,\mathcal T)
\quad\Longrightarrow\quad
\psi\sim\mathcal F
\quad\xrightarrow{\ \ell\ }\quad
\xi
\quad\xrightarrow{\ \mathcal T\ }\quad
\widetilde h.
\]

\subsection{From desiderata to admissible mechanism classes}

A central point of VKD is that the mechanism is not selected from a fixed heuristic menu. Instead, one starts from a collection of learning desiderata $D$---for example positivity, lack of systematic scale drift, locality, smoothness, or minimal extra information---and translates them into mathematical constraints on admissible mechanisms.

Accordingly, VKD should be read as a map
\[
D \quad\longmapsto\quad \mathfrak N(D),
\]
where $\mathfrak N(D)$ is an admissible class of mechanisms consistent with the desiderata. The design problem is then to derive a distinguished mechanism
\[
N^\star \in \mathfrak N(D)
\]
rather than choose one by convention.

This formulation is intentionally general. In some settings, the admissible class may leave several components independent. In other settings, the desiderata may couple the law and the geometry so strongly that the kernel is no longer a free modeling knob but a derived consequence of the design class itself.

\subsection{A two-layer view: mechanism and compatibility}

A VKD mechanism is only half of the story. Once a mechanism has been derived and realized, one must still ask how the deployed perturbation acts on the representations the network actually uses. We therefore separate a second object: a target representation regime $\mathcal R$ together with a collection of compatibility observables
\[
\mathcal O(\widetilde h;\mathcal R),
\]
such as pairwise log-ratio deformation, ranking stability, intrinsic roughness inflation, or topological stability.

The resulting conceptual split is:
\begin{itemize}[leftmargin=1.5em]
    \item \textbf{Mechanism-design layer:} derive $(\mathcal F,K,\mathcal T)$ and the realization map $\ell$ from desiderata;
    \item \textbf{Compatibility layer:} study the induced action of the deployed mechanism on observables relevant to a target representation regime $\mathcal R$.
\end{itemize}

This distinction is especially important in the present paper because the canonical object derived by the variational theory is an exact Wick-normalized gate, while the optimization-friendly implementation used in the main experiments is a sample-wise mean-one gate. The design layer tells us \emph{what the canonical latent geometry is}; the compatibility layer tells us \emph{what the implemented mechanism does once deployed}.

\begin{remark}[VKD is compositional, not temporal]
VKD should not be interpreted as a temporal dynamical system unless an explicit update rule is introduced. Its role here is compositional: desiderata define an admissible class, the variational principle derives a canonical latent law and geometry, the realization map produces a deployed gate, and the compatibility layer studies the induced action of that gate on a target representation regime.
\end{remark}

\subsection{The solved instance studied in this paper}

The present paper studies a solved quadratic VKD subfamily in which the latent object is a centered log-field and spatial coherence is imposed through a quadratic operator budget. In this subfamily, the law and the geometry are not independent design axes: once the operator $Q$ and the energy budget $\eps$ are fixed, the unique maximum-entropy optimizer is Gaussian with covariance proportional to $Q^{-1}$. Thus the kernel is \emph{operator-forced} rather than chosen heuristically.

In the main spatial construction, the perturbation domain is the feature grid $U$, the operator is the Dirichlet Laplacian $Q=L_U$, and the resulting latent law is a discrete Gaussian free field with covariance proportional to the Dirichlet Green kernel $G_U=L_U^{-1}$. A realization map then turns the latent log-field into either a canonical exact Wick-normalized gate or the sample-wise mean-one gate used in the experiments. The injected perturbation is studied precisely in the late-stage positive coherent regime where pairwise log-ratios, ranking preservation, and intrinsic roughness are the relevant observables.

\begin{figure}[t]
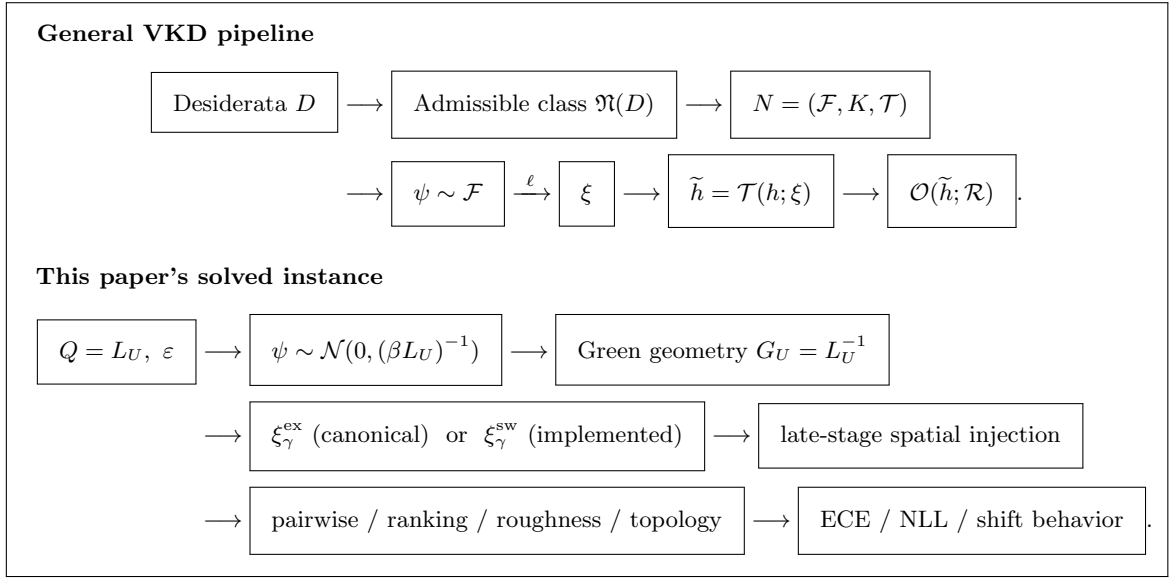

\centering
\setlength{\fboxsep}{8pt}
\fbox{
\begin{minipage}{0.96\linewidth}
\small
\textbf{General VKD pipeline}
\[
\begin{aligned}
\boxed{\text{Desiderata }D}
&\longrightarrow
\boxed{\text{Admissible class }\mathfrak N(D)}
\longrightarrow
\boxed{N=(\mathcal F,K,\mathcal T)}\\[0.2em]
&\longrightarrow
\boxed{\psi\sim\mathcal F}
\xrightarrow{\ \ell\ }
\boxed{\xi}
\longrightarrow
\boxed{\widetilde h=\mathcal T(h;\xi)}
\longrightarrow
\boxed{\mathcal O(\widetilde h;\mathcal R)}.
\end{aligned}
\]

\vspace{0.6em}
\textbf{This paper's solved instance}
\[
\begin{aligned}
\boxed{Q=L_U,\ \eps}
&\longrightarrow
\boxed{\psi\sim\mathcal N(0,(\beta L_U)^{-1})}
\longrightarrow
\boxed{\text{Green geometry }G_U=L_U^{-1}}\\[0.2em]
&\longrightarrow
\boxed{\xi_\gamma^{\mathrm{ex}}\ \text{(canonical)}\ \ \text{or}\ \ \xi_\gamma^{\mathrm{sw}}\ \text{(implemented)}}
\longrightarrow
\boxed{\text{late-stage spatial injection}}\\[0.2em]
&\longrightarrow
\boxed{\text{pairwise / ranking / roughness / topology}}
\longrightarrow
\boxed{\text{ECE / NLL / shift behavior}}.
\end{aligned}
\]
\end{minipage}
}
\caption{\textbf{VKD as a compositional design system.} The top row shows the general design logic: learning desiderata define an admissible mechanism class, from which a mechanism and realization map are derived and then evaluated through compatibility observables on a target representation regime. The bottom row shows the solved instance of this paper. A quadratic operator budget with $Q=L_U$ yields a Gaussian log-field with covariance proportional to the Dirichlet Green kernel. This latent field admits a canonical exact realization through Wick normalization and an optimization-friendly implemented realization through sample-wise mean-one normalization. The resulting deployed gate is analyzed through pairwise, ranking, roughness, and topological observables, and then tested empirically through calibration and reliability metrics.}
\label{fig:vkd_system}
\end{figure}

\paragraph{Roadmap from framework to results.}
Section~\ref{sec:method} fixes the perturbation domain and deployment axis for the spatial setting studied in this paper. Section~\ref{sec:gch} then solves the mechanism-design layer: desiderata define an admissible class, the quadratic MaxEnt principle derives the canonical latent law, and the realization map yields the exact \GCh{} gate together with the implementation-aware variants used in practice. The later subsections of Section~\ref{sec:gch} begin the compatibility layer by analyzing the induced action of the implemented gate on the positive coherent regime relevant to late-stage representations.

\section{A Solved Quadratic VKD Instance: Problem Setup and Discrete GFF Background}
\label{sec:method}

We now instantiate the general system of Section~\ref{sec:framework} in the spatial setting used throughout the paper. To keep the framework explicit, it is useful to separate what is fixed in this section from what is derived in Section~\ref{sec:gch}. Here we fix the perturbation domain $\Omega=U$, the feature space $\mathcal H=\R^{C\times H\times W}$, and the deployment axis $\mathcal T$ as spatial multiplicative injection on a feature grid. In Section~\ref{sec:gch} we then derive the canonical latent law and its induced geometry from the learning desiderata. In the solved instance studied here, the latent object is a centered log-field, the operator budget is the Dirichlet energy, and the resulting canonical geometry is the Dirichlet Green kernel.

\subsection{Injection site and spatial gating}
\label{sec:injection-context}

Fix a layer at which a feature map is perturbed. Let
\[
h \in \R^{C\times H\times W}
\]
denote the feature tensor at that site, with channel index $c\in\{1,\dots,C\}$ and spatial location $x=(i,j)\in U$, where
\[
U=\{1,\dots,H\}\times\{1,\dots,W\}.
\]
We focus on \emph{spatial} perturbations: a random field acts on the $H\times W$ grid and is shared across channels. Concretely, we introduce a positive spatial gate
\[
\nu:U\to(0,\infty),
\]
and apply it identically across channels.

\paragraph{Injection operators.}
The basic multiplicative operator is
\begin{equation}
\mathcal{T}_{\nu}(h)(c,x)=h(c,x)\,\nu(x),
\label{eq:mul_injection}
\end{equation}
that is, pointwise multiplication with spatial broadcasting. For numerical stability or reduced perturbation strength, we may also use the residual form
\begin{equation}
\mathcal{T}_{\nu}^{\mathrm{res}}(h)(c,x)
=
h(c,x)\Big(1+\alpha(\nu(x)-1)\Big),
\qquad \alpha\in(0,1].
\label{eq:residual_mul}
\end{equation}
Unless otherwise stated, we use $\alpha=1$.

\paragraph{Framework instantiation of the deployment axis.}
In the notation of Section~\ref{sec:framework}, this subsection fixes the deployment part of the mechanism: the perturbation domain is the interior grid $U$, the feature space is $\mathcal H=\R^{C\times H\times W}$, and the admissible deployment operators are spatial multiplicative injections such as $\mathcal T_\nu$ and $\mathcal T^{\mathrm{res}}_\nu$. What remains open at this stage is the design layer: which latent law should be sampled, and what correlation geometry should it induce?

\subsection{Discrete Gaussian free field on a rectangular grid}
\label{sec:dgff}

To make the implementation and spectral formulas consistent, we treat the feature grid itself as the interior domain and impose Dirichlet conditions on an \emph{auxiliary outer boundary}. Fix integers $H,W\ge 1$ and define
\[
U=\{1,\dots,H\}\times\{1,\dots,W\},
\qquad
\bar U=\{0,\dots,H+1\}\times\{0,\dots,W+1\},
\]
with auxiliary boundary
\[
B=\bar U\setminus U.
\]
Equip $\bar U$ with the nearest-neighbor undirected edge set
\[
E=\big\{\{x,y\}\subset \bar U:\ \|x-y\|_1=1\big\}.
\]
Optionally, allow positive symmetric edge weights $c_{xy}=c_{yx}>0$ on $\{x,y\}\in E$; the unweighted case is $c_{xy}\equiv 1$.

A field is a function $\phi:U\to\R$. We extend it by zero to the auxiliary boundary:
\[
\bar\phi(y)=
\begin{cases}
\phi(y), & y\in U,\\
0, & y\in B.
\end{cases}
\]

\paragraph{Dirichlet Laplacian and energy.}
For $\phi:U\to\R$, define the Dirichlet Laplacian $L_U$ by
\begin{equation}
(L_U\phi)(x)
=
\sum_{y:\{x,y\}\in E} c_{xy}\big(\phi(x)-\bar\phi(y)\big),
\qquad x\in U.
\label{eq:dirichlet_laplacian}
\end{equation}
Its quadratic form is the Dirichlet energy
\begin{equation}
\mathcal{E}(\phi)
\coloneqq
\frac12\langle \phi,L_U\phi\rangle
=
\frac12\sum_{\{x,y\}\in E} c_{xy}\big(\bar\phi(x)-\bar\phi(y)\big)^2.
\label{eq:dirichlet-energy}
\end{equation}
Under Dirichlet boundary conditions, $L_U$ is symmetric positive definite, so $\mathcal{E}(\phi)>0$ for $\phi\neq 0$.

\paragraph{Discrete GFF.}
Fix an inverse-temperature parameter $\beta>0$. The Dirichlet discrete Gaussian free field (GFF) on $U$ is the centered Gaussian vector
\begin{equation}
\phi\sim\mathcal{N}\!\big(0,(\beta L_U)^{-1}\big).
\label{eq:gff-law}
\end{equation}
Equivalently, its density on $\R^{U}$ is
\begin{equation}
p_\beta(\phi)
=
\frac{1}{Z_\beta}
\exp\!\big(-\beta\,\mathcal{E}(\phi)\big)
=
\Big(\frac{\det(\beta L_U)}{(2\pi)^{|U|}}\Big)^{1/2}
\exp\!\Big(-\tfrac12\phi^\top(\beta L_U)\phi\Big),
\label{eq:gff-density}
\end{equation}
with normalizing constant
\begin{equation}
Z_\beta=(2\pi)^{|U|/2}\det(\beta L_U)^{-1/2}.
\label{eq:gff-normalizer}
\end{equation}

\paragraph{Green kernel.}
Define the Dirichlet Green matrix
\[
G_U\coloneqq L_U^{-1}.
\]
Then the covariance of the GFF is
\begin{equation}
\Cov\big(\phi(x),\phi(y)\big)=\beta^{-1}G_U(x,y),
\qquad x,y\in U.
\label{eq:gff-cov}
\end{equation}

\paragraph{Framework role of this subsection.}
At this point the state space of latent fields and the local operator have been fixed. The next section will solve the design layer inside this spatial VKD class: the variational principle will determine the canonical law family $\mathcal F^\star$, and the induced second-order geometry will appear as a consequence of the chosen operator rather than as an additional hyperparameter.


\section{Solving the Design Layer: From Desiderata to Gaussian Chaos Noise}
\label{sec:gch}

We now solve the mechanism-design layer of VKD for the spatial instantiation fixed in Section~\ref{sec:method}. The logical order is: specify learning desiderata, define the admissible class of latent laws, derive the canonical law and induced geometry, and only then choose a realization map that turns the latent object into a deployed gate. Read in this way, the section is not only about one new noise family; it is the full derivation of a solved VKD instance. The key mathematical point is that the optimization is performed over \emph{laws of the latent log-field}; positivity and mean preservation are imposed afterwards at the realization stage through an exponential link and Wick normalization.

\subsection{Design desiderata}

Each desideratum constrains a different part of the framework: D1 selects the law inside an admissible class, D2--D3 constrain the realization map, D4 determines the operator geometry, and D5 ensures that the operator-level design problem is well posed.

\begin{itemize}[leftmargin=1.5em]
\item[\textbf{D1}] \textbf{Least additional information (maximum entropy).}
Among all admissible laws satisfying the required constraints, choose the one with maximum differential entropy. Intuitively, the perturbation should avoid injecting unintended semantics.

\item[\textbf{D2}] \textbf{Positivity through an exponential link.}
The gate should modulate amplitude without introducing sign flips or hard artifact patterns. We therefore write
\[
\xi=\exp(\zeta)
\]
for a real-valued log-field $\zeta\in\R^U$.

\item[\textbf{D3}] \textbf{No systematic scale drift.}
The gate should not create a persistent gain shift. In the exact construction this is enforced by Wick normalization, giving $\E[\xi(x)]=1$ for every site $x\in U$.

\item[\textbf{D4}] \textbf{Spatial coherence via a quadratic smoothness budget.}
The perturbation should be spatially coherent rather than pixelwise i.i.d. We encode this through a local quadratic budget on the log-field:
\begin{equation}
\E\!\left[\frac12\langle \psi,Q\psi\rangle\right]=\eps,
\label{eq:general_budget}
\end{equation}
where $Q\succ 0$ is a symmetric positive definite operator on $\R^U$. In the canonical grid construction of this paper, $Q=L_U$ is the Dirichlet Laplacian.

\item[\textbf{D5}] \textbf{Well-posedness through gauge fixing.}
A gauge convention is required so that the quadratic operator is invertible. In the main text we impose auxiliary Dirichlet boundary conditions, which make $L_U\succ 0$.
\end{itemize}

\paragraph{Why separate $\zeta$ and $\psi$?}
For the variational problem, the object being optimized is the law of a centered log-field $\psi$. Positivity and mean preservation are then enforced \emph{afterwards} by mapping $\psi$ through a Wick-normalized exponential. This separation is useful because it makes clear which parts of the theory characterize the optimizer of the entropy problem and which parts define the final multiplicative gate.

\subsection{A formal variational class}
\label{sec:formal_variational_class}

Fix an SPD operator $Q$ on $\R^U$, an energy budget $\eps>0$, and let $n\coloneqq |U|$. Define the admissible class
\begin{equation}
\mathcal{A}(Q,\eps)
\coloneqq
\left\{
p:\R^U\to[0,\infty)\ \middle|\ 
\begin{array}{l}
\int_{\R^U} p(\psi)\,d\psi=1,\\[2pt]
\int_{\R^U} \psi\,p(\psi)\,d\psi=0,\\[2pt]
\int_{\R^U} \frac12\langle \psi,Q\psi\rangle p(\psi)\,d\psi=\eps,\\[2pt]
h(p)>-\infty
\end{array}
\right\},
\label{eq:admissible_class}
\end{equation}
where
\[
h(p)\coloneqq -\int_{\R^U} p(\psi)\log p(\psi)\,d\psi
\]
is the differential entropy. The associated variational problem is
\begin{equation}
\sup_{p\in\mathcal{A}(Q,\eps)} h(p).
\label{eq:master_variational_problem}
\end{equation}

This formulation clarifies the scope of the theory. The design class is determined by three ingredients only:
(i) the state space $\R^U$ of log-fields, (ii) the centering and quadratic-budget constraints, and (iii) the choice of local operator $Q$. The role of the operator is especially important: once $Q$ is fixed, the entropy maximizer---if it exists---must reveal the correlation geometry compatible with that operator.

In VKD language, this subsection isolates the law-design part of the mechanism. The deployment axis has already been fixed in Section~\ref{sec:method}; the remaining task is to derive the canonical latent law and the geometry it induces.

\subsection{Quadratic MaxEnt principle and operator-forced kernel geometry}
\label{sec:maxent_sketch}

The next theorem is the main design theorem for the solved quadratic VKD subfamily. It turns desiderata D1, D4, and D5 into a unique latent law, and it makes explicit how the operator budget fixes the scale of the optimizer. Relative to the earlier proof sketch, it yields the optimizer, its entropy value, the explicit scale, and an entropy-gap identity that certifies uniqueness.

\begin{theorem}[Design theorem for the quadratic VKD subfamily]
\label{prop:maxent_logfield}
Let $Q\succ 0$ be symmetric positive definite on $\R^U$, let $n=|U|$, and let $\eps>0$. Then the variational problem \eqref{eq:master_variational_problem} has a unique optimizer
\begin{equation}
p^\star_{Q,\eps}
=
\mathcal{N}\!\big(0,\Sigma_{Q,\eps}\big),
\qquad
\Sigma_{Q,\eps}=\frac{2\eps}{n}\,Q^{-1}.
\label{eq:general_maxent_optimizer}
\end{equation}
Equivalently,
\begin{equation}
p^\star_{Q,\eps}(\psi)
=
\frac{1}{(2\pi)^{n/2}\det(\Sigma_{Q,\eps})^{1/2}}
\exp\!\Big(-\frac12\psi^\top \Sigma_{Q,\eps}^{-1}\psi\Big),
\label{eq:general_gaussian_optimizer}
\end{equation}
with precision matrix
\[
\Sigma_{Q,\eps}^{-1}=\frac{n}{2\eps}\,Q.
\]
Moreover, for every $p\in\mathcal{A}(Q,\eps)$,
\begin{equation}
h(p^\star_{Q,\eps})-h(p)=\KL\!\big(p\,\|\,p^\star_{Q,\eps}\big)\ge 0,
\label{eq:entropy_gap_identity}
\end{equation}
so the optimizer is unique. Its entropy is
\begin{equation}
h(p^\star_{Q,\eps})
=
\frac12\log\!\Big((2\pi e)^n\det\!\big(\tfrac{2\eps}{n}Q^{-1}\big)\Big).
\label{eq:maxent_entropy_value}
\end{equation}
\end{theorem}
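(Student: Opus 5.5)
The plan is the standard Gibbs-inequality argument, organized so that the entropy-gap identity \eqref{eq:entropy_gap_identity} does all the work. First I will check that $p^\star\coloneqq p^\star_{Q,\eps}=\mathcal N(0,\Sigma)$ with $\Sigma=\tfrac{2\eps}{n}Q^{-1}$ is admissible. It integrates to one, it is centered, and $h(p^\star)$ is finite. For the budget, I use $\E_{p^\star}[\tfrac12\psi^\top Q\psi]=\tfrac12\tr(Q\Sigma)=\tfrac12\cdot\tfrac{2\eps}{n}\tr(I_n)=\eps$. This trace computation is where the scale $2\eps/n$ is forced: among Gaussians with precision proportional to $Q$, only this multiple meets the budget. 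The entropy value \eqref{eq:maxent_entropy_value} is then the usual Gaussian formula $\tfrac12\log((2\pi e)^n\det\Sigma)$, which follows from $\E[\psi^\top\Sigma^{-1}\psi]=n$.

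The key step is the observation that $-\log p^\star(\psi)=\tfrac n2\log(2\pi)+\tfrac12\log\det\Sigma+\tfrac{n}{2\eps}\cdot\tfrac12\langle\psi,Q\psi\rangle$ is an affine function of the constrained statistic $\tfrac12\langle\psi,Q\psi\rangle$. Hence for every $p\in\mathcal A(Q,\eps)$ the cross-entropy $-\int p\log p^\star$ depends only on the budget, and it equals $\tfrac n2\log(2\pi)+\tfrac12\log\det\Sigma+\tfrac n2=h(p^\star)$. Note that the centering constraint is not even needed for this identity; it only matters for excluding shifted Gaussians in the uniqueness statement, and it is automatic anyway for the optimizer. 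Writing $\KL(p\|p^\star)=\int p\log p-\int p\log p^\star=-h(p)+h(p^\star)$ then gives \eqref{eq:entropy_gap_identity}. Gibbs' inequality gives $\KL\ge0$, with equality iff $p=p^\star$ a.e. This yields both optimality and uniqueness.

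The main obstacle is rigor in the splitting of $\KL$ into $-h(p)$ plus a cross-entropy: one must ensure there is no $\infty-\infty$. Here the admissibility conditions do exactly what is needed. The cross-entropy integrand $p\cdot(-\log p^\star)$ is nonnegative up to a constant and has finite integral because the second moment of $p$ under $Q$ is $\eps<\infty$. Moreover, $h(p)>-\infty$ by assumption, and $h(p)<+\infty$ follows from the finite second moment, since a finite-covariance density has entropy bounded by the Gaussian one. So $\int p\log p$ is finite, and $\KL(p\|p^\star)=\int p\log(p/p^\star)$ is well defined in $[0,\infty]$. The identity then shows it is in fact finite. I will state these integrability facts explicitly before manipulating the integrals. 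The equivalent density and precision forms \eqref{eq:general_gaussian_optimizer} are then immediate rewritings.
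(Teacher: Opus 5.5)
Your proposal is correct and follows the paper's own argument: admissibility of $p^\star$ via $\tfrac12\tr(Q\Sigma_{Q,\eps})=\eps$, then the observation that $-\log p^\star$ is affine in $\tfrac12\langle\psi,Q\psi\rangle$, so the cross-entropy equals $h(p^\star)$ on all of $\mathcal A(Q,\eps)$ and $h(p^\star)-h(p)=\KL(p\|p^\star)\ge 0$. Your integrability bookkeeping ruling out $\infty-\infty$ is a welcome addition the paper omits, but your side remark on centering needs a correction: centering is not needed for uniqueness either, because the identity holds for every density meeting the quadratic budget and $\KL(p\|p^\star)=0$ already forces $p=p^\star$, so a non-centered competitor (e.g.\ a shifted Gaussian spending part of the budget on its mean) is excluded automatically.
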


\begin{proof}[Proof sketch]
Let $p^\star$ denote the Gaussian density in \eqref{eq:general_maxent_optimizer}. Since
\[
\Sigma_{Q,\eps}^{-1}=\frac{n}{2\eps}Q,
\]
the quadratic constraint implies
\[
\E_{p^\star}\!\left[\frac12\langle \psi,Q\psi\rangle\right]
=
\frac12\tr(Q\Sigma_{Q,\eps})
=
\frac12\tr\!\Big(Q\frac{2\eps}{n}Q^{-1}\Big)
=
\eps,
\]
so $p^\star\in\mathcal{A}(Q,\eps)$. For any feasible $p$,
\[
\KL(p\|p^\star)
=
-h(p)-\int p(\psi)\log p^\star(\psi)\,d\psi.
\]
Because $\log p^\star(\psi)=c-\frac{n}{4\eps}\langle \psi,Q\psi\rangle$ for a constant $c$, and every feasible $p$ has the same normalization, mean, and energy budget, the second term depends only on $(Q,\eps)$ and coincides with $-h(p^\star)$. Hence \eqref{eq:entropy_gap_identity} holds. Uniqueness follows because $\KL(p\|p^\star)=0$ iff $p=p^\star$ a.e. The entropy formula is the standard entropy of a centered Gaussian with covariance $\Sigma_{Q,\eps}$.
\end{proof}

\begin{corollary}[Operator-forced geometry in the Dirichlet instantiation]
\label{cor:green_forcing}
Taking $Q=L_U$ in \Cref{prop:maxent_logfield} yields the unique entropy-maximizing log-field
\begin{equation}
\psi\sim \mathcal{N}\!\big(0,(\beta L_U)^{-1}\big),
\qquad
\beta=\frac{n}{2\eps}.
\label{eq:dirichlet_specialization}
\end{equation}
Its covariance is
\begin{equation}
\Cov(\psi)=\frac{2\eps}{n}\,L_U^{-1}=\beta^{-1}G_U,
\qquad
G_U\coloneqq L_U^{-1}.
\label{eq:green_forced_cov}
\end{equation}
Thus, within the local quadratic design class determined by the Dirichlet energy, the correlation geometry is the Dirichlet Green kernel.
\end{corollary}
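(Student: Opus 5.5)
The corollary follows by specializing \Cref{prop:maxent_logfield} to $Q=L_U$. We first check the hypothesis $Q\succ 0$. Under the auxiliary Dirichlet boundary of Section~\ref{sec:dgff}, the identity
\[
\langle \phi,L_U\phi\rangle=\sum_{\{x,y\}\in E}c_{xy}\bigl(\bar\phi(x)-\bar\phi(y)\bigr)^2
\]
from \eqref{eq:dirichlet-energy} shows that $L_U$ is symmetric. It also shows that $L_U$ is positive semidefinite. If the energy vanishes, $\bar\phi$ is constant on the connected graph $\bar U$. Since $\bar\phi=0$ on $B\neq\emptyset$, this forces $\phi=0$. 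Hence $L_U\succ 0$, which is exactly desideratum D5.

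Applying \Cref{prop:maxent_logfield} with $Q=L_U$ gives a unique optimizer of \eqref{eq:master_variational_problem} over $\mathcal A(L_U,\eps)$, namely $\mathcal N(0,\Sigma)$ with $\Sigma=\frac{2\eps}{n}L_U^{-1}$. We set $\beta\coloneqq n/(2\eps)$. Then $\Sigma=\beta^{-1}L_U^{-1}=(\beta L_U)^{-1}$, which is \eqref{eq:dirichlet_specialization}. Writing $G_U=L_U^{-1}$ gives \eqref{eq:green_forced_cov}, and the optimizer coincides with the discrete GFF law \eqref{eq:gff-law} and its covariance \eqref{eq:gff-cov}. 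Uniqueness is inherited directly from the entropy-gap identity \eqref{eq:entropy_gap_identity}.

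The only step needing care is the positive-definiteness of $L_U$, which reduces to the connectivity argument above. Two points should be made explicit. First, every site of $U$ is joined through $\bar U$ to the boundary $B$. Second, the weights satisfy $c_{xy}>0$, so every edge term must vanish when the energy is zero. The remainder is a direct substitution, and the reparametrization $\beta=n/(2\eps)$ is a bijection between budgets $\eps>0$ and inverse temperatures $\beta>0$.
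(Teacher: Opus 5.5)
Your proposal is correct and follows the paper's route: the corollary is a direct substitution of $Q=L_U$ and $\beta=n/(2\eps)$ into \Cref{prop:maxent_logfield}, as in the Dirichlet specialization of Appendix~\ref{app:variational_derivation}, and your connectivity argument for $L_U\succ 0$ usefully supplies a fact the paper only asserts. One small slip: the quadratic-form identity determines only the symmetric part of $L_U$, so you should read symmetry off the entries (diagonal $\sum_y c_{xy}$, off-diagonal $-c_{xy}$ with $c_{xy}=c_{yx}$) rather than from that identity.
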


\begin{remark}[What is and is not ``forced'' ]
The theorem does \emph{not} say that the Green kernel is universally optimal for every noise-design problem. It says something more precise: once the design class is fixed by a local quadratic budget with operator $Q$, the entropy maximizer has covariance proportional to $Q^{-1}$. The Green kernel is forced specifically because the operator chosen here is the Dirichlet Laplacian.
\end{remark}

\paragraph{Framework interpretation.}
\Cref{prop:maxent_logfield} solves the latent-law axis of the VKD mechanism-design layer, and \Cref{cor:green_forcing} shows that the second-order geometry is induced rather than tuned. What remains is the realization map $\ell$: how to turn the derived latent log-field into a positive, mean-preserving gate that can actually be deployed.

\subsection{From the MaxEnt log-field to the canonical realization map}
\label{sec:gff-to-chaos}

At this point the latent law and induced geometry have been derived. The remaining step in the design layer is the realization map $\ell$: how to turn the latent log-field into a positive gate satisfying D2--D3. The canonical answer in the solved VKD subfamily is the Wick-normalized exponential. Let
\begin{equation}
\psi\sim\mathcal{N}(0,C),
\qquad
C=(\beta L_U)^{-1}=\frac{2\eps}{n}G_U.
\label{eq:gff-standing}
\end{equation}
For a strength parameter $\gamma\in\R$, define the exact Wick-normalized exponential
\begin{equation}
\xi^{\mathrm{ex}}_\gamma(x)
\coloneqq
:\!\exp(\gamma\psi(x))\!:
=
\exp\!\Big(\gamma\psi(x)-\tfrac{\gamma^2}{2}C(x,x)\Big),
\qquad x\in U.
\label{eq:exact_gate}
\end{equation}
This is the canonical exact realization associated with the variationally derived log-field. Positivity comes from the exponential map; mean preservation comes from the Wick correction.

\begin{theorem}[Canonical realization in the solved VKD subfamily]
\label{thm:maxent-gff-chaos}
Under desiderata \textup{D1}--\textup{D5}, the canonical exact positive mean-one multiplicative gate is obtained by:
\begin{enumerate}[leftmargin=1.5em]
\item sampling the MaxEnt log-field
\[
\psi\sim\mathcal{N}\!\big(0,(\beta L_U)^{-1}\big),
\qquad
\beta=\frac{|U|}{2\eps},
\]
and
\item applying the Wick-normalized exponential \eqref{eq:exact_gate}.
\end{enumerate}
For any sites $x_1,\dots,x_m\in U$,
\begin{equation}
\E\!\left[\prod_{r=1}^{m}\xi^{\mathrm{ex}}_\gamma(x_r)\right]
=
\exp\!\left(
\gamma^2
\sum_{1\le a<b\le m}
C(x_a,x_b)
\right).
\label{eq:all_moments_gate}
\end{equation}
In particular,
\begin{align}
\E\big[\xi^{\mathrm{ex}}_\gamma(x)\big] &= 1,
\label{eq:chaos-mean1}\\
\E\big[\xi^{\mathrm{ex}}_\gamma(x)\xi^{\mathrm{ex}}_\gamma(y)\big]
&=
\exp\!\big(\gamma^2 C(x,y)\big).
\label{eq:chaos-second}
\end{align}
Hence the induced second-order gate kernel is
\begin{equation}
K_\gamma(x,y)
\coloneqq
\E\big[\xi^{\mathrm{ex}}_\gamma(x)\xi^{\mathrm{ex}}_\gamma(y)\big]
=
\exp\!\big(\gamma^2 C(x,y)\big).
\label{eq:induced_gate_kernel}
\end{equation}
\end{theorem}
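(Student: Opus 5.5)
The proof has two parts: an identification step that is essentially bookkeeping, and a computational step that is the Gaussian moment generating function.

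\textbf{Identification of the log-field.} By \Cref{cor:green_forcing}, which specializes \Cref{prop:maxent_logfield} to $Q=L_U$, desiderata D1, D4 and D5 single out the unique law $\psi\sim\mathcal N(0,(\beta L_U)^{-1})$ with $\beta=|U|/(2\eps)$. Desideratum D2 forces the gate to have the form $\xi=\exp(\zeta)$ with $\zeta$ a multiple of the log-field. I take $\zeta=\gamma\psi+c(x)$ and choose the deterministic site-wise shift $c(x)$ so that D3 holds. Since $\gamma\psi(x)\sim\mathcal N(0,\gamma^2C(x,x))$, the Gaussian MGF gives $\E[e^{\gamma\psi(x)}]=e^{\gamma^2C(x,x)/2}$. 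Hence $\E[\xi(x)]=1$ holds if and only if $c(x)=-\tfrac{\gamma^2}{2}C(x,x)$, which is exactly \eqref{eq:exact_gate}. The word ``canonical'' is justified this way: among gates of the form $\exp(\gamma\psi(x)+c(x))$ built from the MaxEnt field, the Wick correction is the unique deterministic normalization achieving mean one. I will state this scope explicitly instead of claiming uniqueness over all conceivable positive maps.

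\textbf{Moment formula.} Fix sites $x_1,\dots,x_m$, which may repeat, and set $S=\sum_{r}\gamma\psi(x_r)$. Then $S$ is a linear functional of a centered Gaussian vector, so it is centered Gaussian with
\[
\Var(S)=\gamma^2\sum_{a,b}C(x_a,x_b).
\]
Taking the product of the gates gives
\[
\prod_r\xi^{\mathrm{ex}}_\gamma(x_r)=\exp\Big(S-\tfrac{\gamma^2}{2}\sum_a C(x_a,x_a)\Big).
\]
Applying the MGF, its expectation is
\[
\exp\Big(\tfrac{\gamma^2}{2}\sum_{a,b}C(x_a,x_b)-\tfrac{\gamma^2}{2}\sum_a C(x_a,x_a)\Big).
\]
Splitting the double sum into its diagonal part and twice the sum over $a<b$, using the symmetry of $C$, the diagonal terms cancel and what remains is \eqref{eq:all_moments_gate}. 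Taking $m=1$ (an empty sum) gives \eqref{eq:chaos-mean1}, and $m=2$ gives \eqref{eq:chaos-second}. The kernel $K_\gamma$ in \eqref{eq:induced_gate_kernel} is then just the definition applied to the $m=2$ case.

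\textbf{Main obstacle.} The only delicate point is conceptual, not computational: making precise the sense in which the realization is ``canonical'' under D1--D5. I handle it through the uniqueness-of-normalization argument in the first step, restricted to the exponential-affine class stated there. The moment computation itself is routine once the problem is reduced to a single scalar Gaussian $S$.
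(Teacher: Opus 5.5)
Your proof is correct, and the moment computation is the same as the paper's: reduce to the Gaussian MGF of $\gamma\sum_r\psi(x_r)$, note that the Wick factor cancels exactly the diagonal terms, and read off $m=1,2$ as special cases. Your identification step goes slightly beyond the paper, which treats ``canonical'' as essentially definitional and does not prove it. That step shows the shift $c(x)=-\tfrac{\gamma^2}{2}C(x,x)$ is the unique deterministic one giving sitewise mean one within the class $\exp(\gamma\psi(x)+c(x))$, and restricting the claim to that class is the right way to state it.
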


\begin{proof}
Because $(\psi(x_1),\dots,\psi(x_m))$ is jointly Gaussian,
\[
\E\!\left[\exp\!\Big(\gamma\sum_{r=1}^m \psi(x_r)\Big)\right]
=
\exp\!\left(
\frac{\gamma^2}{2}
\sum_{a=1}^m\sum_{b=1}^m C(x_a,x_b)
\right).
\]
Multiplying by the Wick-normalization factor
\[
\exp\!\left(-\frac{\gamma^2}{2}\sum_{r=1}^m C(x_r,x_r)\right)
\]
leaves only the off-diagonal contribution, yielding \eqref{eq:all_moments_gate}. The one-point and two-point formulas are the cases $m=1$ and $m=2$.
\end{proof}

\begin{proposition}[Effective one-parameter scaling]
\label{prop:effective_tau}
Define
\begin{equation}
\tau\coloneqq \frac{\gamma^2}{\beta}=\frac{2\eps\,\gamma^2}{|U|}.
\label{eq:effective_tau}
\end{equation}
Then the exact gate law depends on $(\beta,\gamma)$ only through $\tau$. Equivalently, if
\[
Y\sim\mathcal{N}(0,\tau G_U),
\]
then
\begin{equation}
\xi^{\mathrm{ex}}_\gamma(x)
\ \stackrel{d}{=}\
\exp\!\Big(Y(x)-\tfrac12\Var(Y(x))\Big).
\label{eq:tau_reparam}
\end{equation}
In particular,
\begin{equation}
K_\gamma(x,y)=\exp\!\big(\tau\,G_U(x,y)\big).
\label{eq:kernel_tau}
\end{equation}
\end{proposition}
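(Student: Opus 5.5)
The plan is to identify the law of the Gaussian vector $\gamma\psi$ and then push it through the Wick-normalized exponential map, which is a deterministic function of that vector.

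\emph{Step 1: identify the law of $\gamma\psi$.} By \eqref{eq:gff-standing}, $\psi\sim\mathcal N(0,C)$ with $C=(\beta L_U)^{-1}=\beta^{-1}G_U$. A linear image of a centered Gaussian vector is centered Gaussian, so
\[
\gamma\psi\sim\mathcal N\!\big(0,\gamma^2\beta^{-1}G_U\big)=\mathcal N(0,\tau G_U)=\mathcal{L}(Y).
\]
This holds for every $\gamma\in\R$, including $\gamma\le 0$ by the symmetry of a centered Gaussian, and $\gamma=0$ (both sides degenerate to $0$).

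\emph{Step 2: express the gate as a function of $\gamma\psi$.} The Wick correction in \eqref{eq:exact_gate} is $\tfrac{\gamma^2}{2}C(x,x)=\tfrac12\Var(\gamma\psi(x))$. Hence
\[
\xi^{\mathrm{ex}}_\gamma=F(\gamma\psi),\qquad F(z)(x)=\exp\!\big(z(x)-\tfrac12\tau G_U(x,x)\big).
\]
The map $F:\R^U\to(0,\infty)^U$ is deterministic and measurable, and it depends on the parameters only through $\tau$ and the fixed matrix $G_U$. Equality in law of $\gamma\psi$ and $Y$ then gives equality in law of $F(\gamma\psi)$ and $F(Y)$ as random fields, which is \eqref{eq:tau_reparam} jointly over all sites.

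\emph{Step 3: conclude dependence only on $\tau$ and the kernel formula.} The joint law of the gate is therefore a function of $\tau$ alone, so any two pairs $(\beta,\gamma)$ with the same ratio $\gamma^2/\beta$ produce identical gate laws. The identity $\tau=2\eps\gamma^2/|U|$ follows from $\beta=|U|/(2\eps)$ in \Cref{cor:green_forcing}. For \eqref{eq:kernel_tau}, substitute $C(x,y)=\beta^{-1}G_U(x,y)$ into \eqref{eq:induced_gate_kernel} from \Cref{thm:maxent-gff-chaos}, giving $\gamma^2C(x,y)=\tau G_U(x,y)$.

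There is no genuine obstacle here. The only point needing care is to state that equality in law holds for the \emph{whole field}, not merely for its marginals. The pushforward argument in Step 2 delivers exactly this, because $F$ acts on the entire vector at once.
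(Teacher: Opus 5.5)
Your proposal is correct and follows the paper's proof. Both arguments identify $\gamma\psi\sim\mathcal N(0,\tau G_U)$, note that the Wick exponential is a deterministic function of this vector depending only on $\tau$, and obtain the kernel from \eqref{eq:induced_gate_kernel} via $C=\beta^{-1}G_U$; your explicit pushforward step, which gives equality in law of the whole field rather than only the marginals, slightly sharpens the paper's one-line argument.
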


\begin{proof}
Since $\psi\sim \mathcal N(0,\beta^{-1}G_U)$, the rescaled field $Y\coloneqq \gamma\psi$ is Gaussian with covariance
\[
\Cov(Y)=\gamma^2\beta^{-1}G_U=\tau G_U.
\]
The exact gate is precisely the Wick exponential of $Y$, so its law is determined by the law of $Y$, hence by $\tau$ alone. Equation \eqref{eq:kernel_tau} follows from \eqref{eq:induced_gate_kernel}.
\end{proof}

\begin{proposition}[Small-strength expansion]
\label{prop:small_gamma}
For each fixed site $x\in U$,
\begin{equation}
\xi^{\mathrm{ex}}_\gamma(x)
=
1+\gamma\psi(x)+\frac{\gamma^2}{2}\Big(\psi(x)^2-C(x,x)\Big)+O_{L^2}(\gamma^3)
\qquad (\gamma\to 0).
\label{eq:small_gamma_expansion}
\end{equation}
Moreover, for any $x,y\in U$,
\begin{align}
\E\big[\xi^{\mathrm{ex}}_\gamma(x)\xi^{\mathrm{ex}}_\gamma(y)\big]
&=
1+\gamma^2 C(x,y)+O(\gamma^4),
\label{eq:kernel_small_gamma}\\
\Cov\big(\xi^{\mathrm{ex}}_\gamma(x),\xi^{\mathrm{ex}}_\gamma(y)\big)
&=
\gamma^2 C(x,y)+O(\gamma^4).
\label{eq:cov_small_gamma}
\end{align}
\end{proposition}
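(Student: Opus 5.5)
We prove the three claims of \Cref{prop:small_gamma} in turn. The first is a Taylor expansion with an explicit $L^2$ remainder. The second and third follow from the exact closed form in \Cref{thm:maxent-gff-chaos}, so no stochastic expansion is needed for them.

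\textbf{Pathwise expansion.} Fix $x$ and write $Z=\psi(x)$, $s=C(x,x)$, so $Z\sim\mathcal N(0,s)$. Then
\[
\xi^{\mathrm{ex}}_\gamma(x)=f(\gamma)\coloneqq\exp\!\big(\gamma Z-\tfrac{\gamma^2}{2}s\big).
\]
We have $f(0)=1$, $f'(0)=Z$ and $f''(0)=Z^2-s$; these are the first Hermite polynomials, as expected for a Wick exponential. Taylor's theorem with integral remainder gives
\[
R_\gamma\coloneqq f(\gamma)-1-\gamma Z-\tfrac{\gamma^2}{2}(Z^2-s)=\tfrac{\gamma^3}{2}\int_0^1(1-t)^2 f'''(t\gamma)\,dt .
\]
Here $f'''(u)=P(Z,u,s)\,f(u)$, with $P$ a polynomial of degree $3$ in $Z$ (and in $u$), since $f'=(Z-us)f$.

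\textbf{The main obstacle} is showing that $\E[R_\gamma^2]=O(\gamma^6)$ uniformly for $|\gamma|\le 1$. We handle it as follows. Apply Minkowski's integral inequality (or Cauchy--Schwarz in $t$) to the remainder, then Cauchy--Schwarz in the expectation:
\[
\E\big[P^2f(u)^2\big]\le \E\big[P^4\big]^{1/2}\,\E\big[f(u)^4\big]^{1/2}.
\]
Both factors are bounded for $|u|\le 1$. The first is a moment of a polynomial in a Gaussian. The second is explicit: $\E[f(u)^4]=\exp(6u^2s)$ by the Gaussian moment generating function. Hence $\|R_\gamma\|_{L^2}\le c\,|\gamma|^3$, which is \eqref{eq:small_gamma_expansion}.

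\textbf{Second moment and covariance.} By \eqref{eq:chaos-second},
\[
\E\big[\xi^{\mathrm{ex}}_\gamma(x)\xi^{\mathrm{ex}}_\gamma(y)\big]=\exp\!\big(\gamma^2C(x,y)\big).
\]
Expanding $e^{u}=1+u+O(u^2)$ at $u=\gamma^2C(x,y)$ gives $1+\gamma^2C(x,y)+O(\gamma^4)$, which is \eqref{eq:kernel_small_gamma}. The exponential series converges absolutely for every $\gamma$, so the remainder is genuinely $O(\gamma^4)$ as $\gamma\to0$. Finally, both one-point means equal $1$ by \eqref{eq:chaos-mean1}. Therefore the covariance equals the second moment minus $1$, which is $\gamma^2C(x,y)+O(\gamma^4)$, proving \eqref{eq:cov_small_gamma}.
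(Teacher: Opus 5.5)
Your proposal is correct and follows the same route as the paper: Taylor-expand the Wick exponential at a fixed site, expand the closed-form two-point function $\exp(\gamma^2 C(x,y))$, and subtract the unit one-point means to get the covariance. The one addition is your integral-remainder estimate, using Cauchy--Schwarz together with $\mathbb{E}[f(u)^4]=\exp(6u^2 s)$. It makes rigorous the $O_{L^2}(\gamma^3)$ bound that the paper's one-line expansion asserts without justification.
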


\begin{proof}
Expand
\[
\exp\!\Big(\gamma\psi(x)-\tfrac{\gamma^2}{2}C(x,x)\Big)
\]
in powers of $\gamma$ and collect terms up to order $\gamma^2$, which gives \eqref{eq:small_gamma_expansion}. Equation \eqref{eq:kernel_small_gamma} follows by expanding \eqref{eq:induced_gate_kernel}:
\[
\exp\!\big(\gamma^2 C(x,y)\big)=1+\gamma^2 C(x,y)+O(\gamma^4).
\]
Since $\E[\xi^{\mathrm{ex}}_\gamma(x)]=1$, subtracting one yields \eqref{eq:cov_small_gamma}.
\end{proof}

\paragraph{Interpretation of the small-$\gamma$ regime.}
\Cref{prop:small_gamma} shows that correlated additive Gaussian perturbation is only the \emph{first-order proxy} of the exact gate. The full multiplicative construction retains positivity, exact mean preservation, and higher-order lognormal structure. This is one mathematically precise sense in which \GCh{} is more than ``correlated Gaussian noise with a different parameterization.''

\paragraph{Exact theory versus implementation variant.}
The closed-form moment identities in \Cref{thm:maxent-gff-chaos,prop:effective_tau,prop:small_gamma} refer to the exact Wick-normalized gate \eqref{eq:exact_gate}. In practice, unless otherwise stated, our experiments use the sample-wise mean-one implementation variant
\begin{equation}
\xi^{\mathrm{sw}}_\gamma(x)
=
\frac{\exp(\gamma\psi(x))}{\frac{1}{|U|}\sum_{y\in U}\exp(\gamma\psi(y))},
\label{eq:samplewise_gate}
\end{equation}
which is also positive and satisfies
\begin{equation}
\frac{1}{|U|}\sum_{x\in U}\xi^{\mathrm{sw}}_\gamma(x)=1
\qquad\text{almost surely.}
\label{eq:samplewise_meanone}
\end{equation}
However, \eqref{eq:samplewise_gate} does \emph{not} preserve the exact sitewise moment formulas of the Wick-normalized gate in general. Accordingly, the exact gate is the canonical object in the theory, while the sample-wise variant is the optimization-friendly implementation used in the main experiments. Appendix~\ref{Norm} compares the two normalizations in more detail.

\paragraph{Framework interpretation.}
This separation is deliberate and belongs to the framework rather than only to this example. In VKD terms, $\xi^{\mathrm{ex}}$ is the canonical realization associated with the design desiderata, whereas $\xi^{\mathrm{sw}}$ is the deployed implementation used for training. The next subsection begins the compatibility layer for the latter.

\subsection{Compatibility layer for the implemented mechanism}
\label{sec:geometry_theory}

At this point the design layer is complete: the perturbation domain, deployment axis, latent law, induced geometry, and realization map have all been specified. We now turn to the second layer of VKD, namely compatibility on a target representation regime. In the language of Section~\ref{sec:framework}, the regime of interest here is the class of positive coherent late-semantic maps, and the main observables are pairwise log-ratio deformation, ranking stability, intrinsic roughness, and topology.

The exact Wick gate is the canonical object of the variational theory, but the main experiments use the sample-wise gate \eqref{eq:samplewise_gate}. That implementation admits a sharp geometric description because it differs from the unnormalized exponential by a \emph{spatially constant} correction in the log domain. This makes it possible to derive exact compatibility statements for the deployed mechanism rather than only for the canonical realization.

\paragraph{Framework reading guide.} The next results should be read as properties of the induced action of the implemented mechanism on the observables above. The pairwise log-ratio theorem gives the local relative-geometry law; the ranking corollary converts it into a probability of evidence preservation; the intrinsic-energy corollary gives the whole-map roughness budget; and the hard-mask results identify the failure mode of discontinuous deletion in the same representation regime. In one sentence: the implemented \GCh{} gate yields a \emph{finite, margin-aware Gaussian deformation} of relative geometry, whereas hard masking yields a \emph{singular, margin-blind deformation} whose relative damage worsens as representations become more coherent.

\paragraph{Assumption-to-practice map.} Three mathematical objects are worth translating immediately into deep-learning language. Positivity models post-ReLU, attention-weighted, or saliency-like activations that encode evidence by magnitude. Pairwise log-ratios model \emph{relative evidence}: how much more strongly one region or token is supported than another. The intrinsic energy $\mathcal E_{\mathrm{int}}$ measures edgewise variation after removing global scale, so low intrinsic energy corresponds to a map that is spatially coherent or low-frequency. With that translation in mind, the next results can be read as statements about evidence preservation, ranking preservation, and geometric distortion of semantic feature maps.

\paragraph{Positivity domain of the log-geometry statements.} All results in this subsection that involve $\log h$ or pairwise log-ratios are stated for strictly positive fields. This is deliberate: the mathematical object under study is the geometry of \emph{positive evidence maps}. In practice, the statements apply exactly on positive-support channels or regions, and one may also work with an $\varepsilon$-lifted field $h+\varepsilon$ if a numerical implementation needs to avoid exact zeros. The paper does not claim these log-geometry theorems for arbitrary signed activations.

\begin{theorem}[Pairwise log-ratio stability of the implemented \GCh{} gate]
\label{thm:samplewise_logratio}
Let $h:U\to(0,\infty)$ be a fixed positive field and define
\[
\widetilde h \coloneqq \xi^{\mathrm{sw}}_\gamma \odot h,
\]
where $\xi^{\mathrm{sw}}_\gamma$ is given by \eqref{eq:samplewise_gate}. For every $x,y\in U$,
\begin{equation}
\Delta^{\mathrm{sw}}_{xy}(h)
\coloneqq
\log\frac{\widetilde h(x)}{\widetilde h(y)}-\log\frac{h(x)}{h(y)}
=
\gamma\big(\psi(x)-\psi(y)\big).
\label{eq:samplewise_logratio}
\end{equation}
Consequently, $\Delta^{\mathrm{sw}}_{xy}(h)$ is centered Gaussian with variance
\begin{equation}
\Var\!\big(\Delta^{\mathrm{sw}}_{xy}(h)\big)
=
\gamma^2\big(C(x,x)+C(y,y)-2C(x,y)\big)
=
\tau R_G(x,y),
\label{eq:samplewise_logratio_var}
\end{equation}
where $\tau=\gamma^2/\beta$ and
\begin{equation}
R_G(x,y)\coloneqq G_U(x,x)+G_U(y,y)-2G_U(x,y).
\label{eq:green_metric}
\end{equation}
More generally, for any collection of pairs $\{(x_r,y_r)\}_{r=1}^m$, the vector
\[
\big(\Delta^{\mathrm{sw}}_{x_ry_r}(h)\big)_{r=1}^m
\]
is jointly Gaussian. In particular, $R_G(x,y)\ge 0$ for all $x,y\in U$ because it is the variance proxy of a Gaussian difference field.
\end{theorem}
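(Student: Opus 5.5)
The argument is a direct algebraic cancellation followed by standard facts about linear images of Gaussian vectors.

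\textbf{Step 1: the normalizer cancels.} Write the denominator of \eqref{eq:samplewise_gate} as
\[
S\coloneqq \frac{1}{|U|}\sum_{z\in U}\exp(\gamma\psi(z)).
\]
It is a strictly positive random scalar that does not depend on the site. Then $\widetilde h(x)=h(x)\exp(\gamma\psi(x))/S$. Since $h$ and the exponential are strictly positive, all logarithms are well defined almost surely, and
\[
\log\widetilde h(x)=\log h(x)+\gamma\psi(x)-\log S .
\]
Subtracting the same identity at $y$ removes both $\log S$ and the fixed field $h$. This gives \eqref{eq:samplewise_logratio} pathwise, not merely in distribution. It is worth stating explicitly that this step uses only that the correction $\log S$ is spatially constant, which is the feature of the sample-wise gate flagged just before the theorem.

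\textbf{Step 2: Gaussianity and variance.} By \eqref{eq:gff-standing}, $\psi\sim\mathcal N(0,C)$ with $C=\beta^{-1}G_U$. The quantity $\gamma(\psi(x)-\psi(y))=\gamma\langle e_x-e_y,\psi\rangle$ is a linear functional of a centered Gaussian vector, so it is centered Gaussian with variance
\[
\gamma^2(e_x-e_y)^\top C(e_x-e_y)=\gamma^2\big(C(x,x)+C(y,y)-2C(x,y)\big).
\]
Substituting $C=\beta^{-1}G_U$ and $\tau=\gamma^2/\beta$ from \Cref{prop:effective_tau} gives $\tau R_G(x,y)$, which is \eqref{eq:samplewise_logratio_var}.

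\textbf{Step 3: joint Gaussianity.} For the family of pairs, the vector $(\Delta^{\mathrm{sw}}_{x_ry_r})_{r=1}^m$ equals $\gamma A\psi$, where $A$ is the $m\times|U|$ matrix with rows $(e_{x_r}-e_{y_r})^\top$. A linear image of a Gaussian vector is jointly Gaussian, with covariance $\tau A G_U A^\top$.

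\textbf{Step 4: nonnegativity of $R_G$.} The inequality $R_G(x,y)\ge0$ holds because $R_G(x,y)=(e_x-e_y)^\top G_U(e_x-e_y)$ and $G_U=L_U^{-1}\succ0$. Moreover $R_G(x,y)>0$ when $x\ne y$. This can equally be read as the variance of $Y(x)-Y(y)$ for $Y\sim\mathcal N(0,G_U)$.

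\textbf{Main obstacle.} There is no real obstacle; the only point needing care is Step 1. One must note that $S$ is random but site-independent, so the cancellation holds almost surely for each realization. The laws of $S$ and $\psi$ are dependent, but this is irrelevant because $S$ disappears entirely.
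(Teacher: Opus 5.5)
Your proposal is correct and follows essentially the same route as the paper: the sample-wise normalizer contributes only the spatially constant term $\log S$ in the log domain, which cancels in the pairwise difference, and the Gaussianity, variance formula and joint Gaussianity then follow because $\Delta^{\mathrm{sw}}$ is a linear functional of $\psi\sim\mathcal N(0,\beta^{-1}G_U)$. Your justification of $R_G\ge 0$ via $G_U=L_U^{-1}\succ 0$ is a slightly sharper version of the paper's variance-of-a-difference remark, since it also gives strict positivity for $x\neq y$.
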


\begin{proof}
Write
\[
c(\psi)\coloneqq \log\!\Big(\frac{1}{|U|}\sum_{z\in U} e^{\gamma\psi(z)}\Big).
\]
Then
\[
\log \widetilde h(x)=\log h(x)+\gamma\psi(x)-c(\psi)
\qquad\text{for every }x\in U.
\]
Subtracting the same identity at $y$ gives \eqref{eq:samplewise_logratio}. Since $\psi$ is Gaussian and $\Delta^{\mathrm{sw}}_{xy}(h)$ is a linear functional of $\psi$, it is centered Gaussian with variance
\[
\gamma^2\Var\!\big(\psi(x)-\psi(y)\big)
=
\gamma^2\big(C(x,x)+C(y,y)-2C(x,y)\big).
\]
Using $C=\beta^{-1}G_U$ yields the final expression $\tau R_G(x,y)$. Joint Gaussianity for finitely many pairs is immediate for the same reason.
\end{proof}

\begin{corollary}[Margin-sensitive ranking stability under the implemented \GCh{} gate]
\label{cor:ranking_stability}
Assume $x\neq y$, $h(x)>h(y)>0$, and $\tau R_G(x,y)>0$, and define the log-margin
\begin{equation}
\delta_{xy}(h)\coloneqq \log h(x)-\log h(y)>0.
\label{eq:log_margin}
\end{equation}
Then under the implemented sample-wise gate,
\begin{equation}
\Pr\big(\widetilde h(x)>\widetilde h(y)\big)
=
\Phi\!\left(\frac{\delta_{xy}(h)}{\sqrt{\tau R_G(x,y)}}\right),
\label{eq:ranking_stability_exact}
\end{equation}
where $\Phi$ is the standard Gaussian cdf. Equivalently,
\begin{equation}
\Pr\big(\widetilde h(x)\le \widetilde h(y)\big)
=
\Phi\!\left(-\frac{\delta_{xy}(h)}{\sqrt{\tau R_G(x,y)}}\right)
\le
\exp\!\left(-\frac{\delta_{xy}(h)^2}{2\tau R_G(x,y)}\right).
\label{eq:ranking_stability_tail}
\end{equation}
\end{corollary}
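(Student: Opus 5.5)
The plan is to reduce the ranking event to a one-dimensional Gaussian tail event using \Cref{thm:samplewise_logratio}, and then apply the standard Chernoff bound for the Gaussian tail.

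First, since $\widetilde h$ is strictly positive, taking logarithms shows that $\widetilde h(x)>\widetilde h(y)$ holds if and only if $\log\widetilde h(x)-\log\widetilde h(y)>0$. By \eqref{eq:samplewise_logratio}, this difference equals
\[
\delta_{xy}(h)+\Delta^{\mathrm{sw}}_{xy}(h)=\delta_{xy}(h)+\gamma\big(\psi(x)-\psi(y)\big).
\]
\Cref{thm:samplewise_logratio} gives that $\Delta^{\mathrm{sw}}_{xy}(h)$ is centered Gaussian with variance $s^2\coloneqq\tau R_G(x,y)$, and this is strictly positive by hypothesis. So we may write $\Delta^{\mathrm{sw}}_{xy}(h)=sZ$ with $Z\sim\mathcal N(0,1)$. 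Then
\[
\Pr\big(\widetilde h(x)>\widetilde h(y)\big)=\Pr\big(Z>-\delta_{xy}(h)/s\big)=\Phi\big(\delta_{xy}(h)/s\big),
\]
where the last equality uses the symmetry of $Z$. This is \eqref{eq:ranking_stability_exact}. The assumption $s>0$ matters here: it makes $Z$ well defined, so the event $\{Z=-\delta/s\}$ has probability zero. That is why the strict and non-strict inequalities give complementary probabilities, and the first expression in \eqref{eq:ranking_stability_tail} follows by taking complements.

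Second, for the tail bound, set $t\coloneqq\delta_{xy}(h)/s>0$. I will show that $\Phi(-t)\le e^{-t^2/2}$ via the Chernoff argument. For any $\lambda>0$,
\[
\Pr(Z\ge t)\le e^{-\lambda t}\,\E[e^{\lambda Z}]=e^{-\lambda t+\lambda^2/2}.
\]
Choosing $\lambda=t$ gives $e^{-t^2/2}$. Substituting $t^2=\delta_{xy}(h)^2/(\tau R_G(x,y))$ yields \eqref{eq:ranking_stability_tail}. If a sharper constant were wanted, the bound $\tfrac12e^{-t^2/2}$ is also available, but the stated form only needs Chernoff.

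There is no real obstacle here. The one point to handle carefully is that the random normalizer $c(\psi)$ in \eqref{eq:samplewise_gate} cancels exactly in the log-ratio. This cancellation is what makes the ranking event depend only on the Gaussian increment $\psi(x)-\psi(y)$, rather than on the non-Gaussian sample-wise normalization, and \Cref{thm:samplewise_logratio} has already established it.
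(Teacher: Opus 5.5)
Your proposal is correct and matches the paper's proof essentially step for step. You use the cancellation of the sample-wise normalizer established in \Cref{thm:samplewise_logratio} to write the log-ratio as $\delta_{xy}(h)+\Delta^{\mathrm{sw}}_{xy}(h)$ with $\Delta^{\mathrm{sw}}_{xy}(h)\sim\mathcal N(0,\tau R_G(x,y))$, read off $\Phi$, and apply the Gaussian tail bound $\Phi(-u)\le e^{-u^2/2}$; your Chernoff derivation of that bound and your remark that $\tau R_G(x,y)>0$ makes the tie event null (so the strict and non-strict events are complementary) fill in details the paper leaves implicit.
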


\begin{proof}
By \Cref{thm:samplewise_logratio},
\[
\log\frac{\widetilde h(x)}{\widetilde h(y)}
=
\log\frac{h(x)}{h(y)}+\Delta^{\mathrm{sw}}_{xy}(h)
=
\delta_{xy}(h)+\Delta^{\mathrm{sw}}_{xy}(h),
\]
where $\Delta^{\mathrm{sw}}_{xy}(h)\sim \mathcal N(0,\tau R_G(x,y))$. Therefore
\[
\Pr\big(\widetilde h(x)>\widetilde h(y)\big)
=
\Pr\big(\delta_{xy}(h)+\Delta^{\mathrm{sw}}_{xy}(h)>0\big)
=
\Phi\!\left(\frac{\delta_{xy}(h)}{\sqrt{\tau R_G(x,y)}}\right),
\]
which is \eqref{eq:ranking_stability_exact}. The tail bound follows from the standard Gaussian bound $\Phi(-u)\le e^{-u^2/2}$ for $u>0$.
\end{proof}

\paragraph{Deep learning interpretation.} Pairwise log-ratios are a natural coordinate system for relative evidence: how much stronger one region, token, or semantic part is than another. \Cref{cor:ranking_stability} says that the implemented \GCh{} gate is \emph{margin-sensitive}: if a feature comparison already has a large semantic log-margin, then the probability of preserving that ordering is exponentially close to one. This is the kind of behavior one wants from a late-stage regularizer---strong semantic contrasts become more, not less, stable. The mild condition $\tau R_G(x,y)>0$ simply excludes the degenerate zero-variance case; on a connected Dirichlet grid it is automatic whenever $x\neq y$ and $\gamma\neq 0$.

To aggregate pairwise distortions over the grid, define the \emph{intrinsic} interior edge set
\[
E_{\mathrm{int}}\coloneqq \{\{x,y\}\in E:\ x,y\in U\}
\]
and the associated intrinsic graph energy
\begin{equation}
\mathcal E_{\mathrm{int}}(f)
\coloneqq
\frac12\sum_{\{x,y\}\in E_{\mathrm{int}}} c_{xy}\big(f(x)-f(y)\big)^2
=
\frac12\langle f,L_{\mathrm{int}}f\rangle,
\label{eq:intrinsic_energy}
\end{equation}
where $L_{\mathrm{int}}$ is the interior graph Laplacian on $U$ with \emph{no} auxiliary boundary term. Unlike the Dirichlet energy used in the variational design, $\mathcal E_{\mathrm{int}}$ is invariant under adding spatial constants, so it measures \emph{relative} geometry.

\begin{corollary}[Exact expected intrinsic roughness budget under the implemented gate]
\label{cor:samplewise_intrinsic}
Let $h:U\to(0,\infty)$ and $\widetilde h=\xi^{\mathrm{sw}}_\gamma\odot h$. Then
\begin{equation}
\E\big[\mathcal E_{\mathrm{int}}(\log \widetilde h)\big]
=
\mathcal E_{\mathrm{int}}(\log h)
+
\gamma^2\eps_{\mathrm{int}},
\qquad
\eps_{\mathrm{int}}\coloneqq \E\big[\mathcal E_{\mathrm{int}}(\psi)\big]=\tfrac12\tr(L_{\mathrm{int}}C).
\label{eq:samplewise_intrinsic_budget}
\end{equation}
\end{corollary}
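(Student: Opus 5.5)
The plan is to reduce everything to the log-domain identity already established in the proof of \Cref{thm:samplewise_logratio}. There, with $c(\psi)=\log\big(\frac{1}{|U|}\sum_{z}e^{\gamma\psi(z)}\big)$, we have
\[
\log\widetilde h=\log h+\gamma\psi-c(\psi)\mathbf 1
\]
pointwise on $U$, where $\mathbf 1$ is the constant field. The key observation is that $\mathcal E_{\mathrm{int}}$ uses only interior edges with no boundary term. Hence $L_{\mathrm{int}}\mathbf 1=0$, and the energy is invariant under adding the random constant $-c(\psi)\mathbf 1$. This gives the almost-sure identity
\[
\mathcal E_{\mathrm{int}}(\log\widetilde h)=\mathcal E_{\mathrm{int}}(\log h+\gamma\psi).
\]
This is exactly where the sample-wise normalization is harmless, and it is why the intrinsic energy, rather than the Dirichlet energy, is the right observable. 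With the Dirichlet energy the constant would not drop out, and the nonlinear $c(\psi)$ would spoil exactness.

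Next I would expand the quadratic form:
\[
\mathcal E_{\mathrm{int}}(\log h+\gamma\psi)=\mathcal E_{\mathrm{int}}(\log h)+\gamma\langle \log h,L_{\mathrm{int}}\psi\rangle+\gamma^2\mathcal E_{\mathrm{int}}(\psi),
\]
and then take expectations term by term.
\begin{itemize}[leftmargin=1.4em]
\item The first term is deterministic, since $h$ is fixed.
\item The cross term is a linear functional of the centered Gaussian $\psi$, so it has mean zero. Integrability is trivial because $\psi$ has finite second moments.
\item For the last term, $\E[\tfrac12\langle\psi,L_{\mathrm{int}}\psi\rangle]=\tfrac12\tr(L_{\mathrm{int}}\E[\psi\psi^\top])=\tfrac12\tr(L_{\mathrm{int}}C)$, using $\Cov(\psi)=C$ from \eqref{eq:gff-standing}. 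This identifies $\eps_{\mathrm{int}}$ and yields \eqref{eq:samplewise_intrinsic_budget}.
\end{itemize}

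There is no genuine obstacle here. The only point needing care is the first step, namely confirming that the normalization enters solely as a spatially constant shift and that $L_{\mathrm{int}}$ annihilates constants. Both follow from the definition \eqref{eq:intrinsic_energy} as a sum over interior edge differences. Optionally, one could remark that $\eps_{\mathrm{int}}=\tfrac{\tau}{2}\tr(L_{\mathrm{int}}G_U)/\gamma^2\cdot\gamma^2$, i.e.\ $\gamma^2\eps_{\mathrm{int}}=\tfrac{\tau}{2}\tr(L_{\mathrm{int}}G_U)$, which expresses the roughness budget through the effective parameter of \Cref{prop:effective_tau}.
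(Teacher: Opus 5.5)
Your proposal is correct and follows the paper's proof essentially step for step: the random constant $-c(\psi)\mathbf 1$ drops out because $L_{\mathrm{int}}\mathbf 1=0$, the cross term vanishes since $\E[\psi]=0$, and the quadratic term gives $\tfrac12\tr(L_{\mathrm{int}}C)$. Only your optional closing remark is garbled as written; the correct form is $\eps_{\mathrm{int}}=\tfrac{1}{2\beta}\tr(L_{\mathrm{int}}G_U)$, so that $\gamma^2\eps_{\mathrm{int}}=\tfrac{\tau}{2}\tr(L_{\mathrm{int}}G_U)$, which agrees with your ``i.e.'' version.
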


\begin{proof}
From the proof of \Cref{thm:samplewise_logratio},
\[
\log \widetilde h = \log h + \gamma\psi - c(\psi)\mathbf 1,
\]
where $\mathbf 1$ is the all-ones vector on $U$. Because $L_{\mathrm{int}}\mathbf 1=0$, the constant term drops out of $\mathcal E_{\mathrm{int}}$. Hence
\[
\mathcal E_{\mathrm{int}}(\log \widetilde h)
=
\mathcal E_{\mathrm{int}}(\log h+\gamma\psi).
\]
Expanding the quadratic form and taking expectation gives
\begin{align*}
\E\big[\mathcal E_{\mathrm{int}}(\log \widetilde h)\big]
&=
\mathcal E_{\mathrm{int}}(\log h)
+
\gamma\,\E\big[\langle \log h,L_{\mathrm{int}}\psi\rangle\big]
+
\gamma^2\E\big[\mathcal E_{\mathrm{int}}(\psi)\big] \\
&=
\mathcal E_{\mathrm{int}}(\log h)+\gamma^2\E\big[\mathcal E_{\mathrm{int}}(\psi)\big],
\end{align*}
where the cross term vanishes because $\E[\psi]=0$. Finally,
\[
\E\big[\mathcal E_{\mathrm{int}}(\psi)\big]
=
\frac12\E[\psi^\top L_{\mathrm{int}}\psi]
=
\frac12\tr(L_{\mathrm{int}}C).
\]
\end{proof}

\paragraph{Deep learning interpretation.} \Cref{cor:samplewise_intrinsic} is the whole-map counterpart of the pairwise result. In the intrinsic log-geometry of a positive feature map, the implemented \GCh{} gate adds an \emph{exactly quantified expected} amount of roughness. It deforms the representation by a finite random field rather than puncturing it with hard zeros. For practitioners, this is the rigorous version of the intuition that \GCh{} injects controlled uncertainty rather than discontinuous semantic damage.

\begin{corollary}[Scale compatibility of the implemented \GCh{} gate]
\label{cor:scale_compatibility}
For any $a>0$ and any positive field $h:U\to(0,\infty)$, let $\widetilde h_a\coloneqq \xi^{\mathrm{sw}}_\gamma\odot (a h)$. Then for every $x,y\in U$,
\begin{equation}
\Delta^{\mathrm{sw}}_{xy}(a h)=\Delta^{\mathrm{sw}}_{xy}(h),
\label{eq:scale_invariant_pairwise}
\end{equation}
and
\begin{equation}
\E\big[\mathcal E_{\mathrm{int}}(\log \widetilde h_a)\big]-\mathcal E_{\mathrm{int}}(\log(a h))
=
\gamma^2\eps_{\mathrm{int}}.
\label{eq:scale_invariant_intrinsic}
\end{equation}
Thus the pairwise deformation law and the added intrinsic roughness budget are invariant under global amplitude rescaling.
\end{corollary}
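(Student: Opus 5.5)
The plan is to reduce both claims to the results already established for an arbitrary positive field, applied to the rescaled field $ah$. The key observation is that multiplying by a constant $a>0$ shifts $\log h$ by the spatial constant $(\log a)\mathbf 1$. Both observables in the statement are blind to spatial constants: pairwise log-ratios cancel them by differencing, and $\mathcal E_{\mathrm{int}}$ kills them because $L_{\mathrm{int}}\mathbf 1=0$.

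For \eqref{eq:scale_invariant_pairwise}, I apply \Cref{thm:samplewise_logratio} to the positive field $ah$. This is legitimate because $ah:U\to(0,\infty)$. The theorem gives $\Delta^{\mathrm{sw}}_{xy}(ah)=\gamma(\psi(x)-\psi(y))$. Since this right-hand side does not involve the field at all, it coincides with $\Delta^{\mathrm{sw}}_{xy}(h)$ pathwise, i.e.\ as an identity of random variables for each realization of $\psi$. One can also verify it directly: $\log\frac{a h(x)}{a h(y)}=\log\frac{h(x)}{h(y)}$, and $\widetilde h_a=a\widetilde h$.

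For \eqref{eq:scale_invariant_intrinsic}, I apply \Cref{cor:samplewise_intrinsic} to $ah$. This gives $\E[\mathcal E_{\mathrm{int}}(\log\widetilde h_a)]=\mathcal E_{\mathrm{int}}(\log(ah))+\gamma^2\eps_{\mathrm{int}}$, and subtracting yields the claim. The budget $\eps_{\mathrm{int}}=\tfrac12\tr(L_{\mathrm{int}}C)$ depends only on the law of $\psi$, so it is the same constant for $h$ and $ah$. As a remark I would add that $\mathcal E_{\mathrm{int}}(\log(ah))=\mathcal E_{\mathrm{int}}(\log h+(\log a)\mathbf 1)=\mathcal E_{\mathrm{int}}(\log h)$, so even the baseline energy is invariant.

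There is no real obstacle here. The only point needing care is to check that the hypotheses of the earlier results (strict positivity of the field) are preserved under rescaling by $a>0$, and to note that the invariance holds pathwise for the pairwise deformation, not merely in distribution.
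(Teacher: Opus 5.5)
Your proposal is correct and follows essentially the paper's argument: both rest on $\log(ah)=\log h+(\log a)\mathbf 1$ and on the fact that the pairwise log-ratio deformation and $\mathcal E_{\mathrm{int}}$ are blind to spatial constants. Your version is, if anything, slightly more direct, since the second identity is literally the intrinsic-budget corollary applied to the positive field $ah$; constant-invariance is then needed only for your remark that $\mathcal E_{\mathrm{int}}(\log(ah))=\mathcal E_{\mathrm{int}}(\log h)$.
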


\begin{proof}
Because $\log(a h)=\log h+(\log a)\mathbf 1$, global rescaling adds only a spatial constant in the log domain. Both \Cref{thm:samplewise_logratio} and the intrinsic energy $\mathcal E_{\mathrm{int}}$ are invariant under such constants, which yields \eqref{eq:scale_invariant_pairwise} and \eqref{eq:scale_invariant_intrinsic}.
\end{proof}

\paragraph{Deep learning interpretation.} This is a concrete advantage of working in multiplicative log-geometry. If the same semantic feature map is globally rescaled---for example by a change in channel gain, normalization, or overall confidence level---the geometric effect of the implemented \GCh{} gate does not change. The perturbation tracks relative structure rather than absolute amplitude.

\begin{corollary}[Finite expected intrinsic roughness for a perfectly coherent positive map under the implemented gate]
\label{cor:gch_perfect_coherence}
Let $h:U\to(0,\infty)$ satisfy $\log h(x)\equiv c$ on $U$ for some constant $c\in\R$. Then for $\widetilde h=\xi^{\mathrm{sw}}_\gamma\odot h$,
\begin{equation}
\E\big[\mathcal E_{\mathrm{int}}(\log \widetilde h)\big]=\gamma^2\eps_{\mathrm{int}}.
\label{eq:gch_perfect_coherence}
\end{equation}
In particular, a perfectly coherent positive map acquires a finite and explicitly budgeted \emph{expected} intrinsic roughness under the implemented \GCh{} gate.
\end{corollary}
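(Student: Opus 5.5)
This is a direct specialization of \Cref{cor:samplewise_intrinsic}, so I would derive it from that budget identity rather than recompute anything from scratch. The plan has two steps.

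First, I would show that the baseline term vanishes. By hypothesis $\log h = c\mathbf 1$ is a spatial constant on $U$. The intrinsic energy \eqref{eq:intrinsic_energy} is a sum of squared differences $(f(x)-f(y))^2$ over interior edges $\{x,y\}\in E_{\mathrm{int}}$, and every such difference is zero for a constant field. Equivalently, $L_{\mathrm{int}}\mathbf 1=0$. Either way, $\mathcal E_{\mathrm{int}}(\log h)=0$.

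It is important to use $\mathcal E_{\mathrm{int}}$ here and not the Dirichlet energy $\mathcal E$ of \eqref{eq:dirichlet-energy}. The latter would charge the boundary edges to $B$ and so would be nonzero on a constant field.

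Second, I would substitute into \eqref{eq:samplewise_intrinsic_budget}. This gives $\E[\mathcal E_{\mathrm{int}}(\log\widetilde h)]=0+\gamma^2\eps_{\mathrm{int}}$, which is exactly \eqref{eq:gch_perfect_coherence}. As a cross-check, the same result follows directly from the proof of \Cref{cor:samplewise_intrinsic}: $\log\widetilde h = c\mathbf 1+\gamma\psi-c(\psi)\mathbf 1$, the constant parts are annihilated by $L_{\mathrm{int}}$, and so $\mathcal E_{\mathrm{int}}(\log\widetilde h)=\gamma^2\mathcal E_{\mathrm{int}}(\psi)$ holds pathwise. Taking expectations then finishes the argument.

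For the ``finite'' claim, I would note that $\eps_{\mathrm{int}}=\tfrac12\tr(L_{\mathrm{int}}C)$ is the trace of a product of two finite matrices, with $C=\beta^{-1}G_U$ well defined because $L_U\succ 0$ (D5). Hence $\eps_{\mathrm{int}}$ is a finite, nonnegative, explicit number.

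There is no real obstacle. The only point needing care is the boundary-versus-interior distinction in the first step.
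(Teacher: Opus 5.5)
Your proposal is correct and follows the paper's proof: the paper likewise notes that $\mathcal E_{\mathrm{int}}(\log h)=0$ for constant $\log h$ and reads the result off \Cref{cor:samplewise_intrinsic}. Your pathwise cross-check and your remark on the finiteness of $\eps_{\mathrm{int}}$ are harmless extras, though the constant $c$ clashes with the normalizer $c(\psi)$, a clash the paper also has.
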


\begin{proof}
If $\log h$ is constant on $U$, then $\mathcal E_{\mathrm{int}}(\log h)=0$. The claim follows immediately from \Cref{cor:samplewise_intrinsic}.
\end{proof}

\paragraph{Deep learning interpretation.} A late-stage representation is often close to piecewise coherent in log-amplitude: within a semantically consistent region, the main issue is not whether the feature is exactly constant, but whether the perturbation preserves the region as a coherent object. \Cref{cor:gch_perfect_coherence} gives an expectation-level statement: starting from zero intrinsic roughness, the implemented \GCh{} gate produces a finite and explicitly budgeted expected roughness level rather than a singular or uncontrolled distortion.

The next result formalizes the opposite behavior of hard binary masks. The singular-ratio statement applies whenever a compared pair can be zeroed with positive probability, and therefore covers dropout, DropBlock, and related hard-masking mechanisms in their natural nontrivial regime.

\begin{theorem}[Binary masks are incompatible with finite log-ratio geometry]
\label{thm:binary_mask_singularity}
Let $h:U\to(0,\infty)$, let $a>0$, and let $m:U\to\{0,a\}$ be any random binary mask. Define $\widetilde h^{m}\coloneqq m\odot h$. If there exist $x,y\in U$ such that
\begin{equation}
\Pr\big(m(x)=0\ \text{or}\ m(y)=0\big)>0,
\label{eq:mask_singularity_condition}
\end{equation}
then
\[
\log\frac{\widetilde h^{m}(x)}{\widetilde h^{m}(y)}
\]
fails to be an almost surely finite real-valued random variable. In particular, no finite-variance analog of \Cref{thm:samplewise_logratio} can hold for such a mask. For inverted dropout at \emph{distinct} compared sites $x\neq y$,
\[
m_q(z)=\frac{b(z)}{q},\qquad b(z)\stackrel{\mathrm{i.i.d.}}{\sim}\mathrm{Bernoulli}(q),
\]
the total probability of a zero event at the compared pair is $1-q^2$, with asymmetric singular events of total probability $2q(1-q)$ and a joint-erasure event of probability $(1-q)^2$.
\end{theorem}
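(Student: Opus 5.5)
The plan is a direct case analysis on the zero events of the mask. Positivity of $h$ is what makes the argument work. Since $h(x),h(y)>0$ and $m$ takes values in $\{0,a\}$ with $a>0$, we have $\widetilde h^{m}(x)=0$ if and only if $m(x)=0$, and likewise at $y$.

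First I split the event $\{m(x)=0\ \text{or}\ m(y)=0\}$ into three disjoint pieces:
\[
A_1=\{m(x)=0,\ m(y)=a\},\qquad A_2=\{m(x)=a,\ m(y)=0\},\qquad A_3=\{m(x)=m(y)=0\}.
\]
On $A_1$ the ratio $\widetilde h^{m}(x)/\widetilde h^{m}(y)$ equals $0$, so its logarithm is $-\infty$. On $A_2$ the ratio has a positive numerator over a zero denominator, so the logarithm is $+\infty$. On $A_3$ the ratio is $0/0$ and the logarithm is undefined. By \eqref{eq:mask_singularity_condition}, $\Pr(A_1\cup A_2\cup A_3)>0$. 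Hence on an event of positive probability the log-ratio is not a finite real number, so it is not an almost surely finite real-valued random variable.

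The ``no finite-variance analog'' clause follows immediately. A centered Gaussian, or any finite-variance real random variable, is almost surely finite. Therefore no identity of the form \eqref{eq:samplewise_logratio}, and no variance law like \eqref{eq:samplewise_logratio_var}, can hold for $\Delta$ built from this mask. The same holds for the ranking law of \Cref{cor:ranking_stability}, since the ordering is degenerate on $A_3$.

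For inverted dropout at distinct sites $x\neq y$, independence gives $b(x),b(y)$ i.i.d.\ $\mathrm{Bernoulli}(q)$. Taking $a=1/q$ puts this in the general setting, and the three probabilities are computed directly:
\[
\Pr(A_1)=\Pr(A_2)=(1-q)q,\qquad \Pr(A_3)=(1-q)^2.
\]
So the asymmetric singular events have total probability $2q(1-q)$, the joint erasure has probability $(1-q)^2$, and the total is $1-q^2$. This is the complement of $\Pr(b(x)=b(y)=1)=q^2$. It is positive whenever $q<1$, so \eqref{eq:mask_singularity_condition} holds in the nontrivial regime.

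The only delicate point is a matter of convention rather than a real obstacle. It is saying precisely what ``fails to be a finite real-valued random variable'' means on $A_3$, where $\log(0/0)$ is undefined rather than infinite. I would treat the log-ratio as an extended-real or undefined value there and argue that in every case it is not a real number on a set of positive probability.
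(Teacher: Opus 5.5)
Your proposal is correct and follows the paper's proof: the same three-way split into the two asymmetric zero events ($\log$-ratio $\mp\infty$) and the joint-erasure event ($0/0$, undefined), followed by the same independence computation $2q(1-q)+(1-q)^2=1-q^2$ for inverted dropout. Your added justification of the ``no finite-variance analog'' clause, that any finite-variance real random variable is almost surely finite, makes explicit a step the paper leaves implicit.
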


\begin{proof}
If $m(x)=0$ and $m(y)=a$, then $\widetilde h^m(x)=0<\widetilde h^m(y)$ and the log-ratio equals $-\infty$. If $m(x)=a$ and $m(y)=0$, then the log-ratio equals $+\infty$. If $m(x)=m(y)=0$, then both numerator and denominator vanish and the log-ratio is undefined, hence not a finite real number. Therefore the log-ratio fails to be almost surely finite whenever \eqref{eq:mask_singularity_condition} holds. For inverted dropout at distinct sites $x\neq y$, independence gives
\[
\Pr\big(m_q(x)=0\ \text{or}\ m_q(y)=0\big)=1-q^2,
\]
while the asymmetric events have total probability $2q(1-q)$ and the joint-erasure event has probability $(1-q)^2$.
\end{proof}

\begin{corollary}[Margin-blind ranking under inverted dropout]
\label{cor:dropout_margin_blind}
Assume $h(x)>h(y)>0$ and let $m_q$ be inverted dropout with keep probability $q\in(0,1]$. Then
\begin{equation}
\Pr\big((m_q\odot h)(x)>(m_q\odot h)(y)\big)=q.
\label{eq:dropout_margin_blind}
\end{equation}
In particular, the probability of preserving the ordering is independent of the magnitude of the underlying feature margin.
\end{corollary}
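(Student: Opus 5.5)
We need $\Pr\big((m_q\odot h)(x)>(m_q\odot h)(y)\big)=q$. The statement implicitly requires $x\neq y$, since $h(x)>h(y)$ forces distinctness. We also use the i.i.d.\ Bernoulli keep variables $b(x),b(y)$ from \Cref{thm:binary_mask_singularity}. The plan is to condition on the four joint outcomes of $(b(x),b(y))\in\{0,1\}^2$ and decide in each case whether the strict inequality holds. Since the factor $1/q$ is common, it suffices to compare $b(x)h(x)$ with $b(y)h(y)$.

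The four cases are as follows.
\begin{itemize}
\item If $b(x)=b(y)=1$, the comparison is $h(x)/q>h(y)/q$. This holds because $h(x)>h(y)$ and $q>0$. The case has probability $q^2$.
\item If $b(x)=1$ and $b(y)=0$, the comparison is $h(x)/q>0$. This holds because $h(x)>0$. The case has probability $q(1-q)$.
\item If $b(x)=0$ and $b(y)=1$, the comparison is $0>h(y)/q$. This fails because $h(y)>0$.
\item If $b(x)=b(y)=0$, the comparison is $0>0$. This fails because the inequality is strict.
\end{itemize}

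The event in question is therefore exactly $\{b(x)=1\}$. By independence, its probability is $q^2+q(1-q)=q$; equivalently, it is simply $\Pr(b(x)=1)=q$. For $q=1$ there is no randomness and the ordering is preserved with probability one, which the formula still covers.

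Margin-blindness follows immediately, because the final expression does not depend on $h$ at all, in particular not on $\delta_{xy}(h)$. For contrast with \Cref{cor:ranking_stability}, the GCh preservation probability tends to $1$ as the margin grows, whereas here it is pinned at $q$.

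There is no genuine obstacle. The only point needing care is the joint-erasure case: a tie, $0=0$, must count as failure of strict preservation. This is the convention the strict inequality in \eqref{eq:dropout_margin_blind} dictates.
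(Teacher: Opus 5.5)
Your proof is correct and is essentially the paper's argument: the paper conditions only on $b(x)$, not on all four joint outcomes, and observes that $b(x)=1$ preserves the strict ordering whatever $b(y)$ is, while $b(x)=0$ forces $0\le (m_q\odot h)(y)$, so the event is exactly $\{b(x)=1\}$. As your ``equivalently'' remark already shows, the independence of $b(x)$ and $b(y)$ is not needed for the conclusion.
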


\begin{proof}
Write $m_q(z)=b(z)/q$ with $b(z)\in\{0,1\}$. If $b(x)=1$, then $(m_q\odot h)(x)=h(x)/q$ and regardless of whether $b(y)=0$ or $1$, one has $(m_q\odot h)(x)>(m_q\odot h)(y)$ because $h(x)>h(y)>0$. If $b(x)=0$, then $(m_q\odot h)(x)=0\le (m_q\odot h)(y)$. Therefore the ordering is preserved if and only if $b(x)=1$, which occurs with probability $q$.
\end{proof}

\paragraph{Deep learning interpretation.} This corollary is intentionally blunt: even if one activation is \emph{arbitrarily} more semantically decisive than another, inverted dropout preserves that ordering with probability exactly $q$ and destroys or erases it with probability $1-q$. In that sense hard masking is \emph{margin-blind}. By comparison, \Cref{cor:ranking_stability} shows that the implemented \GCh{} gate becomes more stable as the semantic margin increases.

\begin{proposition}[Exact intrinsic energy inflation under inverted dropout]
\label{prop:dropout_energy}
Let $m_q(x)=b(x)/q$ with i.i.d.\ $b(x)\sim\mathrm{Bernoulli}(q)$ and $q\in(0,1]$. Then for every deterministic field $h:U\to\R$,
\begin{equation}
\E\big[\mathcal E_{\mathrm{int}}(m_q\odot h)\big]
=
\mathcal E_{\mathrm{int}}(h)
+
\frac{1-q}{2q}\sum_{x\in U} d_x^{\mathrm{int}} h(x)^2,
\label{eq:dropout_energy_inflation}
\end{equation}
where
\[
d_x^{\mathrm{int}}\coloneqq \sum_{y:\{x,y\}\in E_{\mathrm{int}}} c_{xy}
\]
is the intrinsic weighted degree of $x$.
\end{proposition}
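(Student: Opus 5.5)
The plan is to compute the expectation edge by edge, because $\mathcal E_{\mathrm{int}}$ is a finite sum over $E_{\mathrm{int}}$ and expectation is linear. For a fixed edge $\{x,y\}\in E_{\mathrm{int}}$ we have $x\neq y$, so $b(x)$ and $b(y)$ are independent. Expanding the square gives
\[
\E\big[(m_q(x)h(x)-m_q(y)h(y))^2\big]
=
\E[m_q(x)^2]h(x)^2+\E[m_q(y)^2]h(y)^2-2\,\E[m_q(x)]\E[m_q(y)]\,h(x)h(y).
\]
The moments needed are elementary: $\E[m_q(z)]=1$ and $\E[m_q(z)^2]=\E[b(z)]/q^2=1/q$, using $b^2=b$. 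Substituting these turns the edge term into $(h(x)-h(y))^2+\big(\tfrac1q-1\big)\big(h(x)^2+h(y)^2\big)$.

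Next I multiply each edge term by $c_{xy}/2$ and sum over $E_{\mathrm{int}}$. The first part reproduces $\mathcal E_{\mathrm{int}}(h)$ exactly. The excess is
\[
\frac{1-q}{2q}\sum_{\{x,y\}\in E_{\mathrm{int}}}c_{xy}\big(h(x)^2+h(y)^2\big).
\]
The only step that needs care is rewriting this edge sum as a vertex sum. Each unordered edge contributes $c_{xy}h(x)^2$ to its endpoint $x$ and $c_{xy}h(y)^2$ to $y$. Collecting by vertex therefore gives $\sum_{x\in U}h(x)^2\sum_{y:\{x,y\}\in E_{\mathrm{int}}}c_{xy}=\sum_x d_x^{\mathrm{int}}h(x)^2$. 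Each term is counted once, with no factor of two, because the edges are unordered and appear once in $E_{\mathrm{int}}$.

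This yields \eqref{eq:dropout_energy_inflation}. As a consistency check, when $q=1$ the inflation term vanishes, which is correct since then $m_q\equiv 1$. The argument uses no positivity of $h$, consistent with the statement holding for arbitrary real $h$. It also uses no boundary edges, since only $E_{\mathrm{int}}$ enters. I do not expect a real obstacle; the only point to state explicitly is that independence is applied only across distinct endpoints, and interior edges always have distinct endpoints.
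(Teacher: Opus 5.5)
Your proposal is correct and follows essentially the same route as the paper's proof. Both compute the expectation edge by edge using independence of the two endpoint masks and the moments $\E[m_q]=1$, $\E[m_q^2]=1/q$, then multiply by $c_{xy}/2$ and collect the excess into the weighted-degree sum. Your explicit edge-to-vertex bookkeeping is a step the paper leaves implicit.
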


\begin{proof}
Fix an interior edge $\{x,y\}\in E_{\mathrm{int}}$. Since $m_q(x)$ and $m_q(y)$ are independent and $\E[m_q(x)]=1$, $\E[m_q(x)^2]=1/q$, we have
\begin{align*}
\E\big[(m_q(x)h(x)-m_q(y)h(y))^2\big]
&=
\frac{1}{q}h(x)^2+\frac{1}{q}h(y)^2-2h(x)h(y) \\
&=
\big(h(x)-h(y)\big)^2+\Big(\frac{1}{q}-1\Big)\big(h(x)^2+h(y)^2\big).
\end{align*}
Multiply by $c_{xy}/2$ and sum over $E_{\mathrm{int}}$. The first term sums to $\mathcal E_{\mathrm{int}}(h)$, while the second becomes
\[
\frac{1-q}{2q}\sum_{x\in U} d_x^{\mathrm{int}}h(x)^2.
\]
This is exactly \eqref{eq:dropout_energy_inflation}.
\end{proof}

\begin{corollary}[Coherence amplification factor for inverted dropout]
\label{cor:coherence_factor}
Assume $\mathcal E_{\mathrm{int}}(h)>0$ and define the coherence score
\begin{equation}
\kappa(h)\coloneqq \frac{\sum_{x\in U} d_x^{\mathrm{int}} h(x)^2}{2\mathcal E_{\mathrm{int}}(h)}.
\label{eq:coherence_score}
\end{equation}
Then inverted dropout satisfies
\begin{equation}
\frac{\E\big[\mathcal E_{\mathrm{int}}(m_q\odot h)\big]}{\mathcal E_{\mathrm{int}}(h)}
=
1+\frac{1-q}{q}\,\kappa(h).
\label{eq:coherence_factor}
\end{equation}
\end{corollary}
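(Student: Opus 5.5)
This is a direct rescaling of \Cref{prop:dropout_energy}, so the plan is to divide that identity by $\mathcal E_{\mathrm{int}}(h)$ and then recognize the coherence score. First apply \eqref{eq:dropout_energy_inflation} to the deterministic field $h$; positivity of $h$ is not needed there, since that proposition holds for any real field. The assumption $\mathcal E_{\mathrm{int}}(h)>0$ is what makes the division legitimate. It also ensures that $\kappa(h)$ in \eqref{eq:coherence_score} is well defined and finite. Dividing gives
\[
\frac{\E\big[\mathcal E_{\mathrm{int}}(m_q\odot h)\big]}{\mathcal E_{\mathrm{int}}(h)}
=
1+\frac{1-q}{2q}\cdot\frac{\sum_{x\in U} d_x^{\mathrm{int}}h(x)^2}{\mathcal E_{\mathrm{int}}(h)}.
\]

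Next, rewrite the constant by splitting $\frac{1-q}{2q}=\frac{1-q}{q}\cdot\frac12$. The factor $\tfrac12$ combines with the remaining quotient to give exactly $\frac{\sum_x d_x^{\mathrm{int}}h(x)^2}{2\mathcal E_{\mathrm{int}}(h)}=\kappa(h)$. This yields \eqref{eq:coherence_factor}. The edge case $q=1$ needs no separate treatment: there the mask is identically one and both sides equal $1$.

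No step is a genuine obstacle; the only point needing care is the bookkeeping of the factor $\tfrac12$ between the definition of $\kappa$ and the inflation term. Optionally, I would add an interpretive remark tying $\kappa$ to coherence. By the identity $2\mathcal E_{\mathrm{int}}(h)=\sum_x d_x^{\mathrm{int}}h(x)^2-\sum_{x\neq y}c_{xy}h(x)h(y)$, we have $\kappa(h)\ge 1/2$ for positive $h$, since $2\mathcal E_{\mathrm{int}}(h)\le\sum_{\{x,y\}\in E_{\mathrm{int}}} c_{xy}\big(h(x)^2+h(y)^2\big)$. Moreover, $\kappa(h)\to\infty$ as $h$ approaches a constant map, because the numerator stays bounded away from zero while $\mathcal E_{\mathrm{int}}(h)\to0$. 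This remark is not required for the stated identity.
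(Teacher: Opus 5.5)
Your proof is correct and is exactly the paper's argument: divide the inflation identity of \Cref{prop:dropout_energy} by $\mathcal E_{\mathrm{int}}(h)>0$ and absorb the factor $\tfrac12$ into $\kappa(h)$. In your optional remark, the inequality you cite actually gives $\kappa(h)\ge 1$ for positive $h$ (the weaker bound $\kappa(h)\ge\tfrac12$ holds for every real $h$), and the divergence claim needs the limiting constant to be nonzero.
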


\begin{proof}
Divide both sides of \eqref{eq:dropout_energy_inflation} by $\mathcal E_{\mathrm{int}}(h)>0$ and rearrange.
\end{proof}

\paragraph{Deep learning interpretation.} The scalar $\kappa(h)$ is an interpretable mismatch factor: it is large when a feature map carries nontrivial activation mass but varies only weakly across space, i.e. when the representation is coherent. \Cref{cor:coherence_factor} therefore says that hard masking damages coherent representations more severely in relative terms, and it does so by a completely explicit amplification factor.

\begin{corollary}[Immediate loss of perfect coherence under inverted dropout in expectation]
\label{cor:dropout_perfect_coherence}
Assume $q\in(0,1)$ and that the interior graph has at least one edge. Let $h(x)\equiv c$ on $U$ for some constant $c\neq 0$. Then
\begin{equation}
\mathcal E_{\mathrm{int}}(h)=0,
\qquad
\E\big[\mathcal E_{\mathrm{int}}(m_q\odot h)\big]
=
\frac{1-q}{2q}c^2\sum_{x\in U} d_x^{\mathrm{int}}>0.
\label{eq:dropout_perfect_coherence}
\end{equation}
Thus perfect coherence is not preserved by a single masking step: the post-mask field has strictly positive \emph{expected} intrinsic roughness.
\end{corollary}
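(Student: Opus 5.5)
The plan is to specialize \Cref{prop:dropout_energy} to the constant field $h\equiv c$; no new probabilistic computation should be needed.

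First, I verify that $\mathcal E_{\mathrm{int}}(h)=0$. Every interior edge $\{x,y\}\in E_{\mathrm{int}}$ has $h(x)-h(y)=c-c=0$, so each summand in \eqref{eq:intrinsic_energy} vanishes. Equivalently, $L_{\mathrm{int}}\mathbf 1=0$ gives $\langle c\mathbf 1,L_{\mathrm{int}}c\mathbf 1\rangle=0$.

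Second, I substitute $h\equiv c$ into \eqref{eq:dropout_energy_inflation}, which holds for every deterministic real field and every $q\in(0,1]$. With $h(x)^2=c^2$ at every site, this yields
\[
\E\big[\mathcal E_{\mathrm{int}}(m_q\odot h)\big]=0+\frac{1-q}{2q}\,c^2\sum_{x\in U}d_x^{\mathrm{int}},
\]
which is the displayed identity in \eqref{eq:dropout_perfect_coherence}.

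Third, I establish strict positivity by treating each factor separately:
\begin{itemize}[leftmargin=1.4em]
\item $q\in(0,1)$ gives $(1-q)/(2q)>0$.
\item $c\neq 0$ gives $c^2>0$.
\item $\sum_x d_x^{\mathrm{int}}=2\sum_{\{x,y\}\in E_{\mathrm{int}}}c_{xy}$ by the handshake identity, since each edge is counted once from each endpoint. This is strictly positive because the edge weights satisfy $c_{xy}>0$ and, by hypothesis, $E_{\mathrm{int}}$ is nonempty.
\end{itemize}

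The only step that needs care is this last positivity claim. It is also where the hypotheses $q<1$, $c\neq0$, and existence of an interior edge are each used, and dropping any one of them makes the expectation zero.
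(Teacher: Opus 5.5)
Your proposal is correct and follows the paper's proof, which also notes that a constant field has zero intrinsic energy and then specializes the dropout energy-inflation identity. Your handshake-identity argument for strict positivity, using $c_{xy}>0$ and a nonempty $E_{\mathrm{int}}$, fills in a step the paper leaves implicit.
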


\begin{proof}
A constant field has zero intrinsic energy, so the claim follows immediately from \Cref{prop:dropout_energy}.
\end{proof}

\paragraph{Deep learning interpretation.} This is the cleanest possible statement of hard-mask mismatch. Even if a feature map is spatially perfectly coherent before perturbation, binary masking does not preserve that zero-roughness state in any controlled relative sense. After one masking step the representation acquires strictly positive \emph{expected} edgewise roughness, reflecting the discontinuities introduced by hard deletion.

\begin{corollary}[Late-stage mismatch of inverted dropout under coherence]
\label{cor:coherence_mismatch}
Let $(h_\ell)_{\ell\ge 1}$ be deterministic fields on $U$ such that
\begin{equation}
\inf_{\ell\ge 1}\sum_{x\in U} d_x^{\mathrm{int}} h_\ell(x)^2 >0,
\qquad
\mathcal E_{\mathrm{int}}(h_\ell)>0\ \text{for every }\ell,
\qquad
\mathcal E_{\mathrm{int}}(h_\ell)\to 0.
\label{eq:coherence_hypothesis}
\end{equation}
Then for every fixed $q\in(0,1)$,
\begin{equation}
\frac{\E\big[\mathcal E_{\mathrm{int}}(m_q\odot h_\ell)\big]-\mathcal E_{\mathrm{int}}(h_\ell)}{\mathcal E_{\mathrm{int}}(h_\ell)}
\longrightarrow \infty.
\label{eq:coherence_divergence}
\end{equation}
Thus, as the representation becomes more spatially coherent, the relative geometric distortion induced by binary masking diverges.
\end{corollary}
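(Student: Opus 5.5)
The plan is to reduce \eqref{eq:coherence_divergence} to the exact identity of \Cref{prop:dropout_energy} and then apply an elementary limit argument. For each $\ell$, \Cref{prop:dropout_energy} applied to the deterministic field $h_\ell$ gives
\[
\E\big[\mathcal E_{\mathrm{int}}(m_q\odot h_\ell)\big]-\mathcal E_{\mathrm{int}}(h_\ell)
=
\frac{1-q}{2q}\sum_{x\in U} d_x^{\mathrm{int}} h_\ell(x)^2 .
\]
Since $\mathcal E_{\mathrm{int}}(h_\ell)>0$ by hypothesis, dividing is legitimate. The relative distortion in \eqref{eq:coherence_divergence} is then exactly $\frac{1-q}{q}\kappa(h_\ell)$, which is the same identity as \Cref{cor:coherence_factor} with the leading $1$ removed.

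Next we use the hypotheses \eqref{eq:coherence_hypothesis}. Set $M\coloneqq\inf_{\ell}\sum_x d_x^{\mathrm{int}}h_\ell(x)^2>0$. For fixed $q\in(0,1)$ the prefactor $\frac{1-q}{2q}$ is a strictly positive constant. The ratio is therefore bounded below by $\frac{1-q}{2q}\,M/\mathcal E_{\mathrm{int}}(h_\ell)$. Because $\mathcal E_{\mathrm{int}}(h_\ell)\to 0$ through strictly positive values, $1/\mathcal E_{\mathrm{int}}(h_\ell)\to+\infty$, and the lower bound diverges. Given any $K>0$, we choose $\ell_0$ such that $\mathcal E_{\mathrm{int}}(h_\ell)<\frac{1-q}{2q}M/K$ for all $\ell\ge\ell_0$; this yields the divergence explicitly.

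There is no real obstacle here. The only points needing care are the following. First, $q<1$ is needed, since $q=1$ makes the prefactor vanish. Second, the numerator must be bounded away from zero uniformly in $\ell$ and not merely positive for each $\ell$, which is exactly what the infimum condition supplies. Third, positivity of each $\mathcal E_{\mathrm{int}}(h_\ell)$ is needed so that every ratio is defined. I would remark briefly that the weights $c_{xy}>0$ make $d_x^{\mathrm{int}}\ge 0$. This is not needed beyond ensuring that the sum in the numerator is the quantity bounded by $M$.
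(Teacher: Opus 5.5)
Your proposal is correct and follows the paper's proof: apply \Cref{prop:dropout_energy} to write the relative distortion as $\frac{1-q}{2q}\sum_{x\in U} d_x^{\mathrm{int}}h_\ell(x)^2/\mathcal E_{\mathrm{int}}(h_\ell)$, then combine the uniform lower bound on the numerator with the positive, vanishing denominator. Your explicit choice of $\ell_0$ just spells out the limit step that the paper states in one line.
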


\begin{proof}
By \Cref{prop:dropout_energy},
\[
\frac{\E[\mathcal E_{\mathrm{int}}(m_q\odot h_\ell)]-\mathcal E_{\mathrm{int}}(h_\ell)}{\mathcal E_{\mathrm{int}}(h_\ell)}
=
\frac{1-q}{2q}\cdot \frac{\sum_{x\in U} d_x^{\mathrm{int}} h_\ell(x)^2}{\mathcal E_{\mathrm{int}}(h_\ell)}.
\]
The numerator is bounded below by assumption, whereas the denominator tends to zero, so the ratio diverges to $+\infty$.
\end{proof}

\begin{corollary}[Margin-growth regime: \GCh{} strengthens while dropout saturates]
\label{cor:margin_growth_regime}
Fix distinct $x,y\in U$ and assume $\tau R_G(x,y)>0$. Let $(h_\ell)_{\ell\ge 1}$ be positive fields with $h_\ell(x)>h_\ell(y)$ for every $\ell$. Define
\[
\delta_\ell\coloneqq \log h_\ell(x)-\log h_\ell(y).
\]
If
\begin{equation}
\frac{\delta_\ell}{\sqrt{\tau R_G(x,y)}}\longrightarrow \infty,
\label{eq:margin_growth_assumption}
\end{equation}
then under the implemented sample-wise \GCh{} gate,
\begin{equation}
\Pr\big(\widetilde h_\ell(x)>\widetilde h_\ell(y)\big)\longrightarrow 1.
\label{eq:margin_growth_gch}
\end{equation}
Under inverted dropout with keep probability $q$, however,
\begin{equation}
\Pr\big((m_q\odot h_\ell)(x)>(m_q\odot h_\ell)(y)\big)=q\qquad\text{for every }\ell.
\label{eq:margin_growth_dropout}
\end{equation}
\end{corollary}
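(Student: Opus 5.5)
The plan is to derive both limits directly from the two exact ranking laws already established, since the corollary is essentially a limiting restatement of them.

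\emph{The \GCh{} limit.} For each $\ell$, the hypotheses of \Cref{cor:ranking_stability} hold for $h_\ell$. The sites are distinct, $h_\ell(x)>h_\ell(y)>0$, and $\tau R_G(x,y)>0$ by assumption. Its log-margin is $\delta_{xy}(h_\ell)=\delta_\ell>0$. The key observation is that the law of $\Delta^{\mathrm{sw}}_{xy}(h_\ell)$ does not depend on $\ell$. By \Cref{thm:samplewise_logratio}, it equals $\gamma(\psi(x)-\psi(y))$, a fixed $\mathcal N(0,\tau R_G(x,y))$ variable. Applying \eqref{eq:ranking_stability_exact} therefore gives
\[
\Pr\big(\widetilde h_\ell(x)>\widetilde h_\ell(y)\big)=\Phi\!\left(\frac{\delta_\ell}{\sqrt{\tau R_G(x,y)}}\right)
\]
for every $\ell$, with the same denominator throughout. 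Since $\Phi(u)\to1$ as $u\to\infty$, assumption \eqref{eq:margin_growth_assumption} yields \eqref{eq:margin_growth_gch}. For a quantitative rate, I would add the tail bound \eqref{eq:ranking_stability_tail}: the failure probability is at most $\exp(-\delta_\ell^2/(2\tau R_G(x,y)))\to0$.

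\emph{The dropout identity.} For each $\ell$, apply \Cref{cor:dropout_margin_blind} to $h_\ell$, which satisfies $h_\ell(x)>h_\ell(y)>0$. This gives probability exactly $q$, independent of $\ell$ and of $\delta_\ell$, which is \eqref{eq:margin_growth_dropout}. The keep probability $q$ is taken in $(0,1]$ as in that corollary.

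\emph{Main obstacle.} There is no real obstacle; the argument is two applications of earlier results plus continuity of $\Phi$ at infinity. The only point needing care is that $\tau R_G(x,y)$, and hence the Gaussian law of the log-ratio perturbation, is fixed as $\ell$ varies. This is exactly what \Cref{thm:samplewise_logratio} guarantees, because the perturbation does not depend on $h$. So the whole $\ell$-dependence enters through $\delta_\ell$, and \eqref{eq:margin_growth_assumption} is the only hypothesis needed to drive the probability to $1$.
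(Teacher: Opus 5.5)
Your proposal is correct and follows the paper's own proof: apply the exact ranking law of \Cref{cor:ranking_stability} to each $h_\ell$ and let $\Phi\to 1$ under \eqref{eq:margin_growth_assumption}, then invoke \Cref{cor:dropout_margin_blind} for each $\ell$. Your remark that the law of $\gamma(\psi(x)-\psi(y))$ does not depend on $\ell$, and the tail-bound rate, are sound additions that the paper leaves implicit.
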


\begin{proof}
Equation \eqref{eq:margin_growth_gch} follows immediately from \Cref{cor:ranking_stability} and the assumption \eqref{eq:margin_growth_assumption}. Equation \eqref{eq:margin_growth_dropout} is exactly \Cref{cor:dropout_margin_blind}.
\end{proof}

\paragraph{Deep learning interpretation.} \Cref{cor:margin_growth_regime} is the mathematically clean version of the informal slogan that \GCh{} becomes more compatible with later, sharper semantic representations. It does \emph{not} claim that depth is always beneficial in every model. Instead it says: whenever late-stage representations become more decisively separated in their relative log-margins, the implemented \GCh{} gate respects those rankings with probability tending to one, while hard masking stays stuck at the same keep-probability ceiling.

\begin{corollary}[Representation-compatibility dichotomy]
\label{cor:compatibility_dichotomy}
Under the hypotheses of \Cref{thm:samplewise_logratio,cor:samplewise_intrinsic,thm:binary_mask_singularity,cor:coherence_mismatch}, the implemented \GCh{} gate and hard binary masks exhibit qualitatively different behavior on positive coherent representations:
\begin{enumerate}[leftmargin=1.5em]
\item the implemented \GCh{} gate preserves a finite relative log-geometry, with exact Gaussian pairwise deformations, margin-sensitive ranking stability, and an exact additive intrinsic roughness budget;
\item any hard binary mask that can zero one or both members of a compared pair with positive probability fails to preserve finite log-ratio geometry, and inverted dropout preserves pairwise ranking only with the margin-blind probability $q$; and
\item for inverted dropout, the relative intrinsic distortion diverges along coherent representation sequences satisfying \eqref{eq:coherence_hypothesis}.
\end{enumerate}
\end{corollary}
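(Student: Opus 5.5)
The dichotomy is a bookkeeping corollary: each of its three items is read off from results already proved in this subsection. The only work is to match each clause to its source and check that the hypotheses line up. I would treat the three items in turn.

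For item 1, I would invoke \Cref{thm:samplewise_logratio}. It shows that for any positive $h$ the deformation $\Delta^{\mathrm{sw}}_{xy}(h)=\gamma(\psi(x)-\psi(y))$ is almost surely finite and centered Gaussian with variance $\tau R_G(x,y)<\infty$, jointly over finitely many pairs. This is the ``finite relative log-geometry with exact Gaussian pairwise deformations'' clause. Margin-sensitive ranking stability is then \Cref{cor:ranking_stability}: the preservation probability $\Phi(\delta_{xy}/\sqrt{\tau R_G})$ is increasing in the log-margin. The additive roughness budget is \Cref{cor:samplewise_intrinsic}, whose increment $\gamma^2\eps_{\mathrm{int}}$ is finite because $C$ is a finite matrix. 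For coherent maps I would also cite \Cref{cor:gch_perfect_coherence}, which shows the budget stays finite even at zero initial roughness.

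For item 2, the first half is exactly \Cref{thm:binary_mask_singularity}. Under condition \eqref{eq:mask_singularity_condition}, with positive probability the log-ratio is $\pm\infty$ or undefined, so it is not an almost surely finite random variable. The margin-blind clause is \Cref{cor:dropout_margin_blind}: the preservation probability equals $q$ whatever the value of $h(x)/h(y)>1$. I would note explicitly that this contrasts with item 1, because the sample-wise gate is strictly positive and so never meets the singular event.

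For item 3, I would apply \Cref{cor:coherence_mismatch} directly under \eqref{eq:coherence_hypothesis}. It rests on the exact inflation identity of \Cref{prop:dropout_energy}: the numerator stays bounded below while $\mathcal E_{\mathrm{int}}(h_\ell)\to 0$. For contrast, by \Cref{cor:samplewise_intrinsic} the GCh increment is the constant $\gamma^2\eps_{\mathrm{int}}$ independent of $h$.

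The main care point is that item 3 concerns the intrinsic energy of $m_q\odot h$ itself, not of its logarithm, which is undefined there. So the ``qualitative difference'' must be stated item by item, each in the observable where it is meaningful, rather than as a single unified inequality. I would phrase the conclusion accordingly: each item is a restatement of the cited result, with positivity ensuring that every log-quantity in item 1 is well defined.
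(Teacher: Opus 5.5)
Your proposal is correct and follows the paper's route: the paper gives no separate proof and treats the dichotomy as an item-by-item assembly of the pairwise log-ratio theorem, the ranking and roughness corollaries, the binary-mask singularity theorem with the margin-blind dropout corollary, and the coherence-mismatch corollary, just as you do. Your remark that item 3 is measured on $m_q\odot h$ rather than on its logarithm is accurate. The two ranking clauses also tacitly need the extra hypotheses $h(x)>h(y)$ and $\tau R_G(x,y)>0$ of the ranking corollaries, which the dichotomy's stated hypothesis list does not name.
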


The assumptions in \Cref{cor:coherence_mismatch} are a clean mathematical abstraction of the late-semantic regime: the representation retains nontrivial mass but becomes increasingly low-frequency or spatially coherent. In that regime, binary masking becomes more and more mismatched. By contrast, \Cref{thm:samplewise_logratio,cor:ranking_stability,cor:samplewise_intrinsic,cor:margin_growth_regime,cor:compatibility_dichotomy} show that the implemented \GCh{} gate continues to produce a finite Gaussian deformation whose pairwise, ranking, and aggregate effects are controlled by the Green geometry.

\paragraph{Engineering takeaway.} If a layer encodes positive region-level evidence or token-level saliency, then the mathematically relevant question is not merely whether noise is mean-preserving in expectation, but whether it preserves \emph{relative comparisons} that the downstream model relies on. The results above say that \GCh{} perturbs those comparisons through a finite, margin-aware Gaussian deformation, whereas hard binary masks can delete them outright and become especially mismatched when the representation is coherent and semantically sharp.

A topological complement is given in Appendix~\ref{app:topology}: positive multiplicative gates perturb superlevel sets only through a multiplicative threshold band, whereas hard Bernoulli masking destroys loop-type excursion topology with probability $1-q^n$ on an $n$-cycle.

\paragraph{What these theorems do and do not claim.} They do \emph{not} prove that one should always inject noise deeper, nor that every masking strategy is inferior in every possible regime. What they prove is a sharper and more defensible statement: once a layer behaves like a positive coherent evidence map, there is a mathematically meaningful comparison to make. In that regime, the implemented \GCh{} gate preserves finite relative geometry, ranking information, and an explicit global roughness budget, while hard binary masking either makes those quantities singular or amplifies their distortion by an explicit coherence factor. That is exactly the regime targeted by the late-stage experiments in this paper.

\subsection{Implementation and efficient sampling}
\label{sec:implementation-interface}

\paragraph{Injecting the gate.}
Given a feature map $F\in\R^{C\times H\times W}$, we inject the spatial gate multiplicatively:
\begin{equation}
\widetilde{F}_c(x)=F_c(x)\,\xi_\gamma(x),
\qquad x\in U.
\label{eq:inject_gate}
\end{equation}
In the experiments, $\beta$ is fixed once the grid, operator, and normalization convention are chosen; $\gamma$ is the reported strength knob.

\paragraph{FFT/DST sampling of the GFF log-field.}
For the unweighted four-neighbor Dirichlet Laplacian on the $H\times W$ interior grid $U$, the eigenbasis is the 2D sine basis:
\begin{align}
e_{k,\ell}(i,j)
&=
\sin\!\Big(\frac{\pi k i}{H+1}\Big)\,
\sin\!\Big(\frac{\pi \ell j}{W+1}\Big),
\\
\lambda_{k,\ell}
&=
4\sin^2\!\Big(\frac{\pi k}{2(H+1)}\Big)
+
4\sin^2\!\Big(\frac{\pi \ell}{2(W+1)}\Big),
\label{eq:dirichlet-eigs}
\end{align}
for $1\le k\le H$ and $1\le \ell\le W$. Hence sampling
\[
\psi\sim\mathcal{N}(0,(\beta L_U)^{-1})
\]
reduces to spectral synthesis: draw i.i.d.\ $Z_{k,\ell}\sim\mathcal{N}(0,1)$, set
\[
A_{k,\ell}=\frac{Z_{k,\ell}}{\sqrt{\beta\,\lambda_{k,\ell}}},
\]
and compute $\psi=\mathrm{IDST2}(A)$ using an orthonormal inverse discrete sine transform. Fast DST implementations rely on FFT internally, giving near-linear complexity in the number of spatial sites.

\FloatBarrier
\begin{algorithm}[ht]
\caption{\GCh{} on an $H\times W$ grid (Dirichlet; FFT/DST implementation)}
\label{alg:gch}
\begin{algorithmic}[1]
\State \textbf{Input:} grid size $(H,W)$, parameters $\beta>0$, $\gamma\in\R$, feature map $F\in\R^{C\times H\times W}$
\State \textbf{Precompute once:} eigenvalues $\lambda_{k,\ell}$ in \eqref{eq:dirichlet-eigs}; choose a DST convention; optionally precompute the variance map $v(x)=C(x,x)$
\State \textbf{Sample spectral coefficients:} draw i.i.d.\ $Z_{k,\ell}\sim\mathcal{N}(0,1)$
\State \textbf{Scale by the Laplacian spectrum:} set $A_{k,\ell}\leftarrow Z_{k,\ell}/\sqrt{\beta\,\lambda_{k,\ell}}$
\State \textbf{Inverse transform:} $\psi\leftarrow \mathrm{IDST2}(A)$ \hfill (so $\psi\sim\mathcal{N}(0,(\beta L_U)^{-1})$)
\State \textbf{Exponentiate:} $G(x)\leftarrow \exp(\gamma\psi(x))$ for all $x\in U$
\State \textbf{Normalize (choose one):}
\State \quad \textbf{Exact Wick:} $\xi(x)\leftarrow \exp(\gamma\psi(x)-\tfrac{\gamma^2}{2}v(x))$
\State \quad \textbf{Sample-wise mean-one:} $\xi(x)\leftarrow G(x)\Big/\Big(\frac{1}{|U|}\sum_{y\in U}G(y)\Big)$
\State \textbf{Inject into features:} $\widetilde{F}_c(x)\leftarrow F_c(x)\,\xi(x)$ for all channels $c$ and sites $x\in U$
\State \textbf{Output:} noised feature map $\widetilde{F}$
\end{algorithmic}
\end{algorithm}

\section{Experiments}
\label{sec:experiments}

We evaluate whether the design principles behind \GCh{} translate into practical gains. Our empirical questions are: (i) which ingredients matter beyond raw noise magnitude, (ii) where in network depth is the mechanism most effective, and (iii) whether the effect transfers beyond the primary CNN setting. Detailed protocols are provided in Appendix~\ref{app:experiments}.

\paragraph{Theory-to-experiment map.} The representation-compatibility results make four concrete empirical predictions. Pairwise log-ratio stability and the margin-sensitive ranking law predict that when late-stage representations encode decisive relative evidence, \GCh{} should preserve that evidence better than hard masking. The intrinsic roughness budget predicts a broad non-destructive regime of stochasticity rather than abrupt fragmentation. The immediate loss-of-perfect-coherence and coherence-mismatch results predict that once a representation becomes spatially coherent, binary masking should incur disproportionate damage, especially at late stages. Finally, the topological appendix is most relevant to the fine-grained Pets pilot, where preserving coherent part structure matters most directly.

\subsection{Controlled multi-baseline ImageNet study}
\label{sec:main_imagenet}

To isolate the source of the gains, we run a controlled 3-seed comparison that separates noise magnitude, spatial correlation, and positivity/mean-one multiplicative gating. We compare \GCh{} against Dropout and DropBlock, together with additive Gaussian baselines (i.i.d.\ and correlated) whose injected strength is energy-matched. Table~\ref{tab:imagenet_clean_main} reports mean$\pm$std.

Unless otherwise stated, the \GCh{} experiments use the sample-wise mean-one normalization of Algorithm~\ref{alg:gch}, which is the implementation variant used throughout the main body.

\paragraph{Unified strength knob.}
To keep tables compact, we write $g$ for the method-specific strength parameter. For \GCh{}, $g\equiv \gamma$; for Gaussian baselines, $g\equiv \sigma$; for Dropout and DropBlock, $g\equiv p$; and for the no-noise baseline, $g=0$.

\begin{table}[ht]
\centering
\small
\setlength{\tabcolsep}{4pt} 
\renewcommand{\arraystretch}{1.05} 

\begin{tabular}{lcccc}
\toprule
Method & $g$ & Top-1$\uparrow$ & NLL$\downarrow$ & ECE$\downarrow$ \\
\midrule
None & 0 & 0.765$\pm$0.001 & 0.931$\pm$0.004 & 0.030$\pm$0.001 \\
Dropout & 0.1 & 0.764$\pm$0.001 & 0.942$\pm$0.005 & 0.033$\pm$0.001 \\
DropBlock & 0.1 & 0.765$\pm$0.000 & 0.930$\pm$0.002 & 0.032$\pm$0.000 \\
IID Gauss. & 0.1 & 0.765$\pm$0.001 & 0.930$\pm$0.005 & 0.032$\pm$0.002 \\
Cor. Gauss. & 0.1 & 0.765$\pm$0.000 & 0.944$\pm$0.002 & 0.037$\pm$0.001 \\
\textbf{GCh (ours)} & 0.1 & 0.764$\pm$0.001 & 0.934$\pm$0.004 & \textbf{0.020}$\pm$0.001 \\
\bottomrule
\end{tabular}
\caption{ImageNet val (uncorrupted) under late-stage injection (layer4). Mean$\pm$std over 3 seeds. Here $g$ denotes the method-specific strength knob: $g=\gamma$ for GCh, $g=\sigma$ for Gaussian baselines, and $g=p$ for Dropout/DropBlock.}
\label{tab:imagenet_clean_main}
\end{table}

\FloatBarrier

\subsection{Transfer to a transformer backbone: Swin-T}
\label{sec:swin_main}

We additionally evaluate \GCh{} on Swin-T under the same full-recipe training setup. Direct Dropout/DropBlock analogues are less aligned with transformer pipelines because token-based representations and attention updates no longer correspond to contiguous suppression on a convolutional feature grid, and standard transformer regularization usually acts on different objects (e.g., stochastic depth, attention dropout, or MLP dropout). We therefore report the clean full-recipe baseline and isolate the incremental effect of \GCh{} in this setting.

\Cref{tab:swint_fullrecipe} reports best-checkpoint performance.

\begin{table}[ht]
\centering
\small
\setlength{\tabcolsep}{6pt}
\begin{tabular}{lccc}
\toprule
Method & Top-1 Acc. $\uparrow$ & NLL $\downarrow$ & ECE $\downarrow$ \\
\midrule
Baseline (None) & 80.03\% & 0.9213 & 0.0762 \\
GCh\ (ours)    & \textbf{80.11\%} & \textbf{0.9131} & \textbf{0.0738} \\
\bottomrule
\end{tabular}
\caption{\textbf{Swin-T (best checkpoint).} Full-recipe training, single run.}
\label{tab:swint_fullrecipe}
\end{table}
\FloatBarrier

\subsection{What the evidence shows}
\label{sec:analysis_all}

The ImageNet controlled study supports three main conclusions.

First, \emph{correlation alone is not enough}. In Table~\ref{tab:imagenet_clean_main}, the correlated additive Gaussian baseline can worsen calibration relative to the no-noise baseline at matched strength. The strongest ECE improvements appear only when correlation is combined with \emph{positive mean-one multiplicative gating}, namely in \GCh{}.

Second, \emph{depth matters}. Injection depth induces a clear accuracy--calibration trade-off (Appendix Table~\ref{tab:depth_ablation}): moving from earlier to later stages substantially improves calibration while changing accuracy only modestly. In the selected 7-corruption ImageNet-C evaluation, the late-stage setting reduces ECE by 46\% and improves NLL by 3.3\% relative to the no-noise baseline (Appendix Table~\ref{tab:imagenetc_overall}), with corruption-wise details in Table~\ref{tab:imagenetc_breakdown}.

Third, \emph{there is a stable operating regime}. A strength sweep reveals a broad useful range around $g\approx 0.07$--$0.18$. When $g$ becomes too large, accuracy collapses and NLL rises sharply; ECE can also become misleadingly small under severe underconfidence, which we treat as a failure mode rather than a favorable outcome (Appendix Table~\ref{tab:imagenetc_gamma}). This empirical pattern is consistent with \Cref{thm:samplewise_logratio,cor:ranking_stability,cor:samplewise_intrinsic,cor:gch_perfect_coherence,cor:dropout_margin_blind,cor:dropout_perfect_coherence,cor:coherence_mismatch}: the implemented \GCh{} gate induces a finite Gaussian deformation in relative log-geometry together with an exact expected intrinsic roughness budget, while also becoming more stable when semantic margins are sharper; hard masking, by contrast, is margin-blind and incurs a coherence-sensitive geometric penalty whose relative size grows in the coherent-representation regime.

Finally, the Swin-T result provides preliminary transfer evidence beyond the primary ResNet-50 setting, and the Oxford-IIIT Pets pilot in the appendix supports the claim that spatially coherent positive gating is especially compatible with fine-grained structure-sensitive recognition.

\section{Conclusion}
\label{sec:conclusion}

We proposed Variational Kernel Design as a compositional framework for internal noise in deep learning. In VKD, a stochastic mechanism is not chosen from a fixed heuristic menu; it is derived from learning-relevant constraints and then analyzed on the representation regime where it is actually deployed. This two-layer viewpoint---mechanism design followed by compatibility analysis---is the main conceptual contribution of the paper.

Within the solved quadratic VKD subfamily studied here, we formulated the log-field construction problem as an explicit finite-dimensional maximum-entropy program under a quadratic operator budget, solved that program in closed form, and obtained an entropy-gap identity showing that the optimizer is uniquely Gaussian. For the Dirichlet operator, this makes the Green kernel emerge as the induced correlation geometry; after Wick normalization, it yields the canonical exact \GCh{} gate. Once the operator and energy budget are fixed, the exact gate becomes an effectively one-parameter family through $\tau=\gamma^2/\beta$.

The more distinctive message of the paper is the representation-compatibility layer that sits on top of this variational design. For the sample-wise gate actually used in the experiments, we established exact Gaussian control of pairwise log-ratios, margin-sensitive ranking stability, and an exact expected intrinsic roughness budget. For hard binary masking, we proved the opposite kind of statement: incompatibility with finite log-ratio geometry, margin-blind ranking under inverted dropout, immediate loss of perfect coherence in expectation on perfectly coherent maps, and a relative distortion term that diverges in the coherent-representation regime. The central contrast is therefore not merely \emph{Gaussian versus Bernoulli}; it is \emph{finite, margin-aware deformation versus singular or coherence-amplified deletion}.

These theorems are intentionally conditional rather than universal. They do not claim that every deeper layer in every architecture will automatically favor \GCh{}. They claim something more precise and, for practice, more useful: whenever positive semantic representations become coherent and their relative evidence sharpens, smooth multiplicative gating preserves those comparisons in a way that hard deletion cannot. That conditional form is exactly what allows the theory to speak directly to the late-stage regime without pretending to replace empirical evaluation.

Empirically, \GCh{} improves calibration on clean ImageNet, improves both ECE and NLL on a selected 7-corruption ImageNet-C evaluation, remains effective in late semantic stages where hard masking can degrade clean calibration, and shows encouraging transfer to Swin-T and a fine-grained pilot. The practical takeaway is simple: if a layer carries positive, coherent, region-level evidence, then the right question is not merely whether noise is unbiased, but whether it perturbs relative evidence smoothly or deletes it abruptly, and whether that perturbation respects the comparisons the downstream network actually uses. Our theory says that \GCh{} does the former, whereas canonical hard binary masks such as dropout and DropBlock-type deletion mechanisms tend to do the latter in the coherent late-stage regime.

More broadly, the paper suggests a reusable recipe for future work. First choose the operator that encodes the geometry one wants the noise to respect. Then derive the corresponding latent law and realization. Finally, ask whether the implemented mechanism is compatible with the representation regime of interest. That perspective opens the door to principled variants based on massive, anisotropic, graph-adapted, or architecture-specific operators while preserving the same mathematical blueprint.

\bibliography{references_final}

\appendix

\section{Notation and Terminology (Glossary)}
\label{app:glossary}

\begin{itemize}[leftmargin=1.5em]
\item $U$: the $H\times W$ feature grid on which the gate is sampled and applied.
\item $B$: the auxiliary Dirichlet boundary outside $U$; $\bar U=U\cup B$.
\item $L_U$: Dirichlet Laplacian on $U$; $G_U=L_U^{-1}$: Dirichlet Green kernel.
\item $\mathcal F$: law family of latent log-fields in the VKD mechanism.
\item $K$: intended second-order geometry / kernel in the VKD mechanism.
\item $\mathcal T$: injection operator in the VKD mechanism.
\item $\ell$: realization map from latent log-field to positive gate.
\item $\mathcal R$: target representation regime for compatibility analysis.
\item $\psi$: log-field; $\xi$: positive multiplicative gate; $\gamma$: \GCh{} strength parameter.
\item $g$: unified strength knob in the experimental tables ($g=\gamma/\sigma/p$ depending on the method).
\item \textbf{\GCh}: Gaussian Chaos Noise / gate (ours).
\item \textbf{IID/Corr. Gaussian}: additive Gaussian baselines with matched injected energy.
\end{itemize}

\section{Full variational derivation for Theorem~\ref{prop:maxent_logfield}}
\label{app:variational_derivation}

This appendix gives a fuller proof of the quadratic MaxEnt principle, including an entropy-gap identity and, for completeness, the corresponding Euler--Lagrange stationarity calculation.

Let $Q\succ 0$ be symmetric positive definite on $\R^U$, let $n=|U|$, and let $\eps>0$. Recall the admissible class
\[
\mathcal{A}(Q,\eps)
=
\left\{
p:\R^U\to[0,\infty)\ :\
\int p=1,\ 
\int \psi\,p(\psi)\,d\psi=0,\ 
\int \frac12\langle \psi,Q\psi\rangle p(\psi)\,d\psi=\eps,\ 
h(p)>-\infty
\right\}.
\]

\subsection{Entropy-gap proof of optimality and uniqueness}

Define
\[
\Sigma_{Q,\eps}\coloneqq \frac{2\eps}{n}Q^{-1},
\qquad
p^\star(\psi)\coloneqq
\frac{1}{(2\pi)^{n/2}\det(\Sigma_{Q,\eps})^{1/2}}
\exp\!\Big(-\frac12\psi^\top\Sigma_{Q,\eps}^{-1}\psi\Big).
\]
Since $\Sigma_{Q,\eps}^{-1}=\frac{n}{2\eps}Q$, we have
\[
\E_{p^\star}\!\left[\frac12\langle \psi,Q\psi\rangle\right]
=
\frac12\tr(Q\Sigma_{Q,\eps})
=
\frac12\tr\!\Big(Q\frac{2\eps}{n}Q^{-1}\Big)
=
\eps,
\]
and clearly $\E_{p^\star}[\psi]=0$, so $p^\star\in\mathcal A(Q,\eps)$.

Now fix any $p\in\mathcal A(Q,\eps)$. Using the definition of KL divergence,
\begin{align}
\KL(p\|p^\star)
&=
\int p(\psi)\log\frac{p(\psi)}{p^\star(\psi)}\,d\psi \nonumber\\
&=
-h(p)-\int p(\psi)\log p^\star(\psi)\,d\psi.
\label{eq:kl_gap_step1}
\end{align}
Since
\[
\log p^\star(\psi)
=
-\frac{n}{2}\log(2\pi)
-\frac12\log\det(\Sigma_{Q,\eps})
-\frac12\psi^\top\Sigma_{Q,\eps}^{-1}\psi,
\]
and $\Sigma_{Q,\eps}^{-1}=\frac{n}{2\eps}Q$, the energy constraint gives
\begin{align}
-\int p(\psi)\log p^\star(\psi)\,d\psi
&=
\frac{n}{2}\log(2\pi)+\frac12\log\det(\Sigma_{Q,\eps})
+\frac{n}{2\eps}\int \frac12\langle \psi,Q\psi\rangle p(\psi)\,d\psi \nonumber\\
&=
\frac{n}{2}\log(2\pi)+\frac12\log\det(\Sigma_{Q,\eps})+\frac{n}{2}.
\label{eq:kl_gap_step2}
\end{align}
But the right-hand side is exactly the entropy of $p^\star$:
\[
h(p^\star)
=
\frac12\log\!\Big((2\pi e)^n\det(\Sigma_{Q,\eps})\Big).
\]
Therefore \eqref{eq:kl_gap_step1} and \eqref{eq:kl_gap_step2} imply
\[
\KL(p\|p^\star)=h(p^\star)-h(p).
\]
Because $\KL(p\|p^\star)\ge 0$, we obtain
\[
h(p)\le h(p^\star),
\]
with equality iff $\KL(p\|p^\star)=0$, i.e.\ iff $p=p^\star$ almost everywhere. This proves both optimality and uniqueness.

\subsection{Euler--Lagrange derivation (for completeness)}

The same optimizer can be recovered by stationarity. Introduce Lagrange multipliers $\lambda_0\in\R$, $\lambda\in\R^U$, and $\beta\in\R$, and define
\begin{align}
\mathcal{L}(p)
&=
-\int p(\psi)\log p(\psi)\,d\psi
+\lambda_0\!\left(\int p(\psi)\,d\psi-1\right)\nonumber\\
&\quad
+\left\langle \lambda,\int \psi\,p(\psi)\,d\psi\right\rangle
-\beta\!\left(\int \frac12\langle \psi,Q\psi\rangle p(\psi)\,d\psi-\eps\right).
\label{eq:EL_lagrangian}
\end{align}
For an interior optimum, the first variation in a direction $\delta p$ gives
\[
\int \Big(
-\log p(\psi)-1+\lambda_0+\langle \lambda,\psi\rangle-\beta\tfrac12\langle \psi,Q\psi\rangle
\Big)\delta p(\psi)\,d\psi=0.
\]
Hence the Euler--Lagrange equation is
\[
-\log p(\psi)-1+\lambda_0+\langle \lambda,\psi\rangle-\beta\tfrac12\langle \psi,Q\psi\rangle=0,
\]
so
\[
p(\psi)\propto \exp(\langle \lambda,\psi\rangle)\,
\exp\!\Big(-\beta\tfrac12\langle \psi,Q\psi\rangle\Big).
\]
The centering constraint forces $\lambda=0$, and integrability requires $\beta>0$ because $Q\succ 0$. Thus
\[
p(\psi)\propto \exp\!\Big(-\beta\tfrac12\langle \psi,Q\psi\rangle\Big)
=
\exp\!\Big(-\tfrac12\psi^\top(\beta Q)\psi\Big),
\]
which is the centered Gaussian $\mathcal N(0,(\beta Q)^{-1})$. Matching the energy budget yields
\[
\eps
=
\frac12\tr\!\big(Q(\beta Q)^{-1}\big)
=
\frac{n}{2\beta},
\qquad\text{so}\qquad
\beta=\frac{n}{2\eps}.
\]
This reproduces
\[
(\beta Q)^{-1}=\frac{2\eps}{n}Q^{-1}=\Sigma_{Q,\eps}.
\]

\subsection{Dirichlet specialization}

Setting $Q=L_U$ gives the optimizer used in the main text:
\[
p^\star_{L_U,\eps}
=
\mathcal N\!\big(0,(\beta L_U)^{-1}\big),
\qquad
\beta=\frac{|U|}{2\eps}.
\]
Its covariance is
\[
\Cov(\psi)=\frac{2\eps}{|U|}L_U^{-1}=\beta^{-1}G_U.
\]

\paragraph{Other boundary conditions.}
If one uses periodic or Neumann boundary conditions on a connected finite graph, the Laplacian has a constant nullspace, so the corresponding field must be defined after gauge fixing, for example by pinning one site or imposing zero spatial mean and using the Moore--Penrose pseudoinverse. Under the auxiliary Dirichlet boundary used in the main text, $L_U\succ 0$ and no additional gauge fixing is needed.

\paragraph{Massive variant.}
A regularized or \emph{massive} variant replaces $L_U$ by $L_U+\mu I$ for $\mu>0$:
\[
\psi\sim\mathcal{N}\!\big(0,(\beta(L_U+\mu I))^{-1}\big).
\]
This corresponds to the quadratic energy
\[
\mathcal{E}_\mu(\psi)=\frac12\psi^\top(L_U+\mu I)\psi
\]
and yields a better conditioned covariance with shorter-range correlations.

\section{Further properties of the exact Gaussian-chaos gate}
\label{app:exact_gate_props}

This appendix collects simple but useful consequences of the exact construction.

\subsection{All-order moment formula}

Let $\xi^{\mathrm{ex}}_\gamma$ be defined by \eqref{eq:exact_gate}. For any $x_1,\dots,x_m\in U$,
\[
\E\!\left[\prod_{r=1}^{m}\xi^{\mathrm{ex}}_\gamma(x_r)\right]
=
\exp\!\left(
\gamma^2\sum_{1\le a<b\le m} C(x_a,x_b)
\right).
\]
The proof is the same Gaussian moment-generating calculation used in \Cref{thm:maxent-gff-chaos}.

\subsection{Effective one-parameter scaling}

Writing $\tau=\gamma^2/\beta$, the exact gate can be rewritten as the Wick exponential of a Gaussian field $Y\sim\mathcal N(0,\tau G_U)$:
\[
\xi^{\mathrm{ex}}_\gamma(x)
\stackrel{d}{=}
\exp\!\Big(Y(x)-\tfrac12\Var(Y(x))\Big).
\]
Hence the exact gate law depends on $(\gamma,\beta)$ only through $\tau$. This is the precise sense in which, once the operator and energy budget are fixed, the exact mechanism becomes an effectively one-parameter family.

\subsection{Small-strength regime}

Expanding the exact gate at small $\gamma$ gives
\[
\xi^{\mathrm{ex}}_\gamma(x)
=
1+\gamma\psi(x)+\frac{\gamma^2}{2}\big(\psi(x)^2-C(x,x)\big)+O_{L^2}(\gamma^3).
\]
Consequently,
\[
\E\big[\xi^{\mathrm{ex}}_\gamma(x)\xi^{\mathrm{ex}}_\gamma(y)\big]
=
1+\gamma^2C(x,y)+O(\gamma^4),
\qquad
\Cov\big(\xi^{\mathrm{ex}}_\gamma(x),\xi^{\mathrm{ex}}_\gamma(y)\big)
=
\gamma^2C(x,y)+O(\gamma^4).
\]
This makes explicit that additive correlated Gaussian noise is a first-order approximation of the exact gate but does not preserve the positivity or higher-order structure of the multiplicative mechanism.

\section{Why the Green kernel emerges in this design class}
\label{app:green_kernel}

The purpose of this appendix is to state the precise structural lesson of the variational analysis. The Green kernel is not an extra hypothesis layered on top of the model. It appears because the design class is built from a \emph{quadratic operator budget}, and the entropy maximizer in such a class always has covariance equal to the inverse of that operator.

\paragraph{General operator principle.}
Let $Q\succ 0$ be any symmetric positive definite operator on $\R^U$ and consider the admissible class $\mathcal A(Q,\eps)$ from \eqref{eq:admissible_class}. By \Cref{prop:maxent_logfield}, the unique entropy maximizer is
\[
\mathcal N\!\big(0,\tfrac{2\eps}{|U|}Q^{-1}\big).
\]
Hence the covariance kernel is forced to be proportional to $Q^{-1}$.

\paragraph{Dirichlet specialization.}
In the main text, the operator in the budget is the Dirichlet Laplacian $Q=L_U$, so the covariance becomes
\[
\Cov(\psi)=\frac{2\eps}{|U|}L_U^{-1}=\beta^{-1}G_U.
\]
This is the exact sense in which the Dirichlet Green kernel is \emph{forced}: it is the inverse operator corresponding to the chosen local smoothness budget.

\paragraph{Operator substitution principle.}
The same reasoning immediately yields a family of designed noises.
\begin{corollary}[Replacing the operator replaces the kernel]
\label{cor:operator_substitution}
Fix any SPD operator $Q$ on $\R^U$ and define the quadratic budget
\[
\E\!\left[\frac12\langle \psi,Q\psi\rangle\right]=\eps.
\]
Then the unique maximum-entropy log-field is Gaussian with covariance
\[
\Cov(\psi)=\frac{2\eps}{|U|}Q^{-1}.
\]
After Wick normalization, the exact multiplicative gate has kernel
\[
K_\gamma(x,y)=\exp\!\Big(\gamma^2 \frac{2\eps}{|U|}Q^{-1}(x,y)\Big).
\]
\end{corollary}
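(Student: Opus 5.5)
This corollary follows almost immediately by composing two earlier results, so the plan is a short reduction rather than a new argument. First, I would invoke \Cref{prop:maxent_logfield} with the given SPD operator $Q$ and budget $\eps$. Its hypotheses are exactly those here: $Q\succ 0$ symmetric on $\R^U$, $\eps>0$, and the admissible class $\mathcal A(Q,\eps)$ from \eqref{eq:admissible_class}, which encodes centering plus the quadratic budget. That theorem gives the unique maximizer $\mathcal N(0,\Sigma_{Q,\eps})$ with $\Sigma_{Q,\eps}=\frac{2\eps}{|U|}Q^{-1}$, and uniqueness via the entropy-gap identity \eqref{eq:entropy_gap_identity}. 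This settles the first claim, with $n=|U|$.

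Second, for the gate kernel I would set $C\coloneqq \frac{2\eps}{|U|}Q^{-1}$ and define the exact gate
\[
\xi^{\mathrm{ex}}_\gamma(x)=\exp\!\big(\gamma\psi(x)-\tfrac{\gamma^2}{2}C(x,x)\big).
\]
I would then repeat the moment computation from the proof of \Cref{thm:maxent-gff-chaos}. That proof used only joint Gaussianity of $(\psi(x),\psi(y))$ with covariance $C$, never the specific form $C=(\beta L_U)^{-1}$. Concretely, $\gamma(\psi(x)+\psi(y))$ is centered Gaussian with variance $\gamma^2(C(x,x)+C(y,y)+2C(x,y))$. The Gaussian moment generating function therefore gives
\[
\E[e^{\gamma\psi(x)+\gamma\psi(y)}]=\exp\!\big(\tfrac{\gamma^2}{2}(C(x,x)+C(y,y))+\gamma^2C(x,y)\big).
\]
The Wick factors cancel the diagonal terms, leaving $K_\gamma(x,y)=\exp(\gamma^2 C(x,y))$. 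Substituting $C$ yields the displayed formula.

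The only point requiring care is the second step: confirming that \Cref{thm:maxent-gff-chaos}'s computation does not depend on $Q=L_U$, so that it transfers verbatim to a general SPD $Q$. I would also note the special case $x=y$, which is handled correctly because the formula then gives $\exp(\gamma^2C(x,x))$. Beyond that, no obstacle is expected. Equivalently, the mean-one property $\E[\xi^{\mathrm{ex}}_\gamma(x)]=1$ and the one-parameter reparametrization of \Cref{prop:effective_tau} carry over with $G_U$ replaced by $Q^{-1}$.
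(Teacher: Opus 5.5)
Your proposal is correct and follows the paper's route: the covariance comes directly from \Cref{prop:maxent_logfield}, and the kernel comes from the Gaussian moment computation in \Cref{thm:maxent-gff-chaos} with $C=\frac{2\eps}{|U|}Q^{-1}$. As you observe, that computation uses only joint Gaussianity and not the specific choice $Q=L_U$.
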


This corollary is useful conceptually. It shows that VKD is not tied to one operator or one architecture. Choosing $Q=L_U$ gives the massless Dirichlet construction of the main paper; choosing $Q=L_U+\mu I$ gives a massive variant with shorter-range correlations; choosing an anisotropic or graph-adapted operator would produce the corresponding inverse-kernel geometry. The core variational logic remains unchanged.

\section{Topological stability of positive gates and fracture under hard masks}
\label{app:topology}

For a positive field $f:U\to(0,\infty)$ and a threshold $t>0$, define the superlevel set
\[
S_t(f)\coloneqq \{x\in U: f(x)\ge t\},
\]
and view it as the induced subgraph of the underlying adjacency graph on $U$.

\begin{proposition}[Threshold-band stability under positive multiplicative gating]
\label{prop:superlevel_sandwich}
Let $h:U\to(0,\infty)$, let $\xi:U\to(0,\infty)$, and assume $\|\log \xi\|_\infty\le \eta$. Then for every $t>0$,
\begin{equation}
S_{t e^{\eta}}(h)\subseteq S_t(\xi\odot h)\subseteq S_{t e^{-\eta}}(h).
\label{eq:superlevel_sandwich}
\end{equation}
In particular, the superlevel topology of $\xi\odot h$ at level $t$ can differ from that of $h$ only through threshold events already present in the band $[t e^{-\eta},t e^{\eta}]$.
\end{proposition}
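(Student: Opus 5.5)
The plan is a pointwise sandwich argument, since superlevel sets are defined sitewise and both inclusions in \eqref{eq:superlevel_sandwich} reduce to scalar inequalities at each $x\in U$. The hypothesis $\|\log\xi\|_\infty\le\eta$ is equivalent to $e^{-\eta}\le\xi(x)\le e^{\eta}$ for every $x\in U$, and positivity of $h$ lets us multiply these bounds by $h(x)>0$ without reversing inequalities. This gives
\[
e^{-\eta}h(x)\le (\xi\odot h)(x)\le e^{\eta}h(x)\qquad\text{for all }x\in U.
\]

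For the left inclusion, take $x\in S_{te^{\eta}}(h)$, so $h(x)\ge te^{\eta}$. The lower bound then gives $(\xi\odot h)(x)\ge e^{-\eta}h(x)\ge t$, hence $x\in S_t(\xi\odot h)$.

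For the right inclusion, take $x\in S_t(\xi\odot h)$. The upper bound gives $t\le(\xi\odot h)(x)\le e^{\eta}h(x)$, so $h(x)\ge te^{-\eta}$ and $x\in S_{te^{-\eta}}(h)$.

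Because the superlevel sets are viewed as induced subgraphs of the same adjacency graph, vertex inclusions are automatically induced-subgraph inclusions. No separate argument is needed for edges.

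For the ``in particular'' clause, note that the outer sets differ only at sites $x$ with $h(x)\in[te^{-\eta},te^{\eta})$:
\[
S_{te^{-\eta}}(h)\setminus S_{te^{\eta}}(h)=\{x:\ te^{-\eta}\le h(x)<te^{\eta}\}.
\]
So $S_t(\xi\odot h)$ agrees with $S_t(h)$ except possibly at sites where $h$ lies in the band $[te^{-\eta},te^{\eta}]$, and any change in connectivity or loops must come from adding or removing such band vertices.

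I expect no real obstacle. The only point needing care is stating the interpretive clause precisely: I would read ``topology can differ only through threshold events in the band'' as exactly the set-difference statement above, rather than as a quantitative claim about Betti numbers.
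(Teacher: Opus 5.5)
Your proof is correct and is essentially the paper's argument: the paper likewise converts $\|\log\xi\|_\infty\le\eta$ into $e^{-\eta}\le\xi(x)\le e^{\eta}$ and checks both inclusions sitewise, writing the second one as $h(x)\ge t/\xi(x)\ge te^{-\eta}$. Your set-difference reading of the interpretive clause is sound and slightly more explicit than what the paper gives. It relies on the fact that $S_t(h)$ lies between the same two sets $S_{te^{\eta}}(h)$ and $S_{te^{-\eta}}(h)$, which holds because $\eta\ge 0$; you may want to state that step.
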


\paragraph{Deep learning interpretation.} The proposition says that a positive multiplicative gate does not tear the superlevel geometry apart arbitrarily. It only moves the effective threshold within a multiplicative band. For representation learning, this is a rigorous way to say that coherent regions can shift smoothly under \GCh{} rather than being punctured by hard zeros.

\begin{proof}
From $\|\log \xi\|_\infty\le \eta$ we have $e^{-\eta}\le \xi(x)\le e^{\eta}$ for every $x\in U$. If $h(x)\ge t e^{\eta}$, then
\[
\xi(x)h(x)\ge e^{-\eta}\, t e^{\eta}=t,
\]
so $x\in S_t(\xi\odot h)$. Conversely, if $\xi(x)h(x)\ge t$, then
\[
h(x)\ge \frac{t}{\xi(x)}\ge t e^{-\eta},
\]
so $x\in S_{t e^{-\eta}}(h)$.
\end{proof}

\begin{proposition}[Sample-wise \GCh{} obeys a random sandwich width]
\label{prop:gch_random_sandwich}
For the implemented gate $\xi^{\mathrm{sw}}_\gamma$ in \eqref{eq:samplewise_gate},
\begin{equation}
\|\log \xi^{\mathrm{sw}}_\gamma\|_\infty
\le
|\gamma|\,\operatorname{osc}(\psi),
\qquad
\operatorname{osc}(\psi)\coloneqq \max_{x\in U}\psi(x)-\min_{x\in U}\psi(x).
\label{eq:gch_oscillation_bound}
\end{equation}
Hence \Cref{prop:superlevel_sandwich} applies with $\eta=|\gamma|\operatorname{osc}(\psi)$.
\end{proposition}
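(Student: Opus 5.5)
The plan is to work directly with the log-domain representation already used in the proof of \Cref{thm:samplewise_logratio}. Writing $c(\psi)=\log\big(\frac{1}{|U|}\sum_{z\in U}e^{\gamma\psi(z)}\big)$, we have
\[
\log\xi^{\mathrm{sw}}_\gamma(x)=\gamma\psi(x)-c(\psi)\qquad\text{for every }x\in U.
\]
The bound should then follow from locating $c(\psi)$ inside the range of the values $\gamma\psi(z)$.

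The first step is a sandwich for the normalizer. Set $M\coloneqq\max_{z\in U}\gamma\psi(z)$ and $m\coloneqq\min_{z\in U}\gamma\psi(z)$. The quantity $\frac{1}{|U|}\sum_z e^{\gamma\psi(z)}$ is an average of numbers lying in $[e^{m},e^{M}]$, so it also lies in that interval. Since $\log$ is monotone, this gives $m\le c(\psi)\le M$. Put differently, $c(\psi)$ is a log-mean-exp, which always sits between the minimum and maximum of its arguments.

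The second step is the pointwise bound. For each $x$, both $\gamma\psi(x)$ and $c(\psi)$ lie in $[m,M]$, so
\[
|\log\xi^{\mathrm{sw}}_\gamma(x)|\le M-m.
\]
Next we identify $M-m$ with $|\gamma|\operatorname{osc}(\psi)$, splitting into cases on the sign of $\gamma$:
\begin{itemize}
\item if $\gamma\ge0$, then $M-m=\gamma(\max\psi-\min\psi)$;
\item if $\gamma<0$, the roles of max and min swap, giving $M-m=|\gamma|(\max\psi-\min\psi)$.
\end{itemize}
Taking the maximum over $x$ yields \eqref{eq:gch_oscillation_bound}.

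Finally, because the bound holds realization by realization, \Cref{prop:superlevel_sandwich} applies pathwise with the random width $\eta=|\gamma|\operatorname{osc}(\psi)$, which is almost surely finite. There is no real obstacle here. The only point needing care is the sign bookkeeping for $\gamma<0$, and I would handle it simply by working with $M-m$ rather than with $\psi$ directly.
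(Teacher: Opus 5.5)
Your proposal is correct and follows essentially the same argument as the paper: write $\log\xi^{\mathrm{sw}}_\gamma(x)=\gamma\psi(x)-c(\psi)$, sandwich the log-mean-exp normalizer $c(\psi)$ between $\min_{z}\gamma\psi(z)$ and $\max_{z}\gamma\psi(z)$, and conclude that $|\log\xi^{\mathrm{sw}}_\gamma(x)|\le|\gamma|\operatorname{osc}(\psi)$ for every $x$. Your explicit case split on the sign of $\gamma$ is a small detail that the paper leaves implicit.
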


\begin{proof}
Write
\[
\log \xi^{\mathrm{sw}}_\gamma(x)=\gamma\psi(x)-c(\psi),
\qquad
c(\psi)=\log\!\Big(\frac{1}{|U|}\sum_{y\in U} e^{\gamma\psi(y)}\Big).
\]
Since the logarithm of an average of exponentials lies between the minimum and maximum exponent,
\[
\min_{y\in U}\gamma\psi(y)\le c(\psi)\le \max_{y\in U}\gamma\psi(y).
\]
Therefore each quantity $\gamma\psi(x)-c(\psi)$ lies in the interval
\[
[-|\gamma|\operatorname{osc}(\psi),\ |\gamma|\operatorname{osc}(\psi)],
\]
which is exactly \eqref{eq:gch_oscillation_bound}.
\end{proof}

To contrast this with hard masking, recall that for any finite graph $H$ the first Betti number equals the cycle rank
\[
\beta_1(H)=|E(H)|-|V(H)|+\beta_0(H).
\]

\begin{theorem}[Cycle-topology fracture under inverted dropout]
\label{thm:cycle_fracture}
Let the underlying graph be the cycle $C_n$, let $q\in(0,1]$, let $h\equiv c>0$ on its vertices, and choose a threshold $t\in (0,c/q)$. Under inverted dropout,
\[
m_q=\frac{b}{q},\qquad b(v)\stackrel{\mathrm{i.i.d.}}{\sim}\mathrm{Bernoulli}(q),
\]
define $\widetilde h\coloneqq m_q\odot h$. Then $S_t(\widetilde h)$ is exactly the induced subgraph on the kept vertices, and
\begin{equation}
\Pr\big(\beta_1(S_t(\widetilde h))=1\big)=q^n,
\qquad
\Pr\big(\beta_1(S_t(\widetilde h))=0\big)=1-q^n.
\label{eq:cycle_fracture_probs}
\end{equation}
Equivalently, the loop topology is destroyed with probability $1-q^n$.
\end{theorem}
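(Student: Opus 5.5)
The argument has two steps: identify the superlevel set, then count cycles in induced subgraphs of $C_n$.

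\emph{Step 1 (identification of the superlevel set).} Each vertex takes one of two values. If $b(v)=1$, then $\widetilde h(v)=c/q>t$, so $v\in S_t(\widetilde h)$. If $b(v)=0$, then $\widetilde h(v)=0<t$, so $v\notin S_t(\widetilde h)$. Hence the vertex set of $S_t(\widetilde h)$ is exactly $K\coloneqq\{v:b(v)=1\}$. By the convention that superlevel sets are viewed as induced subgraphs, $S_t(\widetilde h)=C_n[K]$. The hypothesis $t\in(0,c/q)$ is used only here. The strict inequality $t<c/q$ puts kept vertices in the set, and $t>0$ excludes erased vertices.

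\emph{Step 2 (topology of induced subgraphs of a cycle).} We show that $\beta_1(C_n[K])=1$ if $K=V(C_n)$ and $\beta_1(C_n[K])=0$ otherwise.

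If $K$ is the full vertex set, the induced subgraph is $C_n$ itself. Then $|E|=|V|=n$ and $\beta_0=1$, so the cycle-rank formula $\beta_1=|E|-|V|+\beta_0$ gives $\beta_1=1$.

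If some vertex $v$ is missing, then $C_n[K]$ is a subgraph of $C_n-v$, which is a path, so it is a forest. For a forest each component is a tree with $|E_i|=|V_i|-1$. Summing over components gives $|E|=|V|-\beta_0$, so $\beta_1=0$.

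One degenerate case must be fixed by a convention. For $n\le 2$, $C_n$ is not a simple graph, and one could take $n\ge3$ or treat $C_n$ as a multigraph. The formula $|E|-|V|+\beta_0$ still gives $1$ for the full cycle and $0$ once any vertex is removed. This is the only place I expect any care to be needed, and the obstacle is purely one of convention, not of substance.

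\emph{Step 3 (probabilities).} By Steps 1 and 2, $\beta_1(S_t(\widetilde h))=1$ exactly on the event $\{b(v)=1\ \forall v\}$. Independence of the $b(v)$ gives this event probability $q^n$. Since $\beta_1$ takes only the values $0$ and $1$, the complement has probability $1-q^n$, which is \eqref{eq:cycle_fracture_probs}. When $q=1$ the loop is preserved almost surely, which is consistent with the formula.
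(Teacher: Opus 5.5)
Your proposal is correct and follows the paper's proof essentially step for step. You identify $S_t(\widetilde h)$ as the induced subgraph on the kept vertices using $0<t<c/q$, note that deleting any vertex leaves a disjoint union of paths with $\beta_1=0$, and obtain $q^n$ for the all-kept event by independence; your extra remarks (the explicit role of $t>0$, the forest cycle-rank count, and the $n\le 2$ convention) are accurate details the paper leaves implicit.
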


\begin{proof}
Because $t<c/q$, a vertex belongs to $S_t(\widetilde h)$ if and only if it is kept. Thus $S_t(\widetilde h)$ is the induced subgraph on the kept vertices. If all $n$ vertices are kept, this induced subgraph is the full cycle $C_n$, so $\beta_1=1$. If at least one vertex is dropped, the induced subgraph is a disjoint union of paths, hence acyclic and therefore has $\beta_1=0$. The all-kept event has probability $q^n$.
\end{proof}

\paragraph{Deep learning interpretation.} Closed contours, ring-like activation patterns, and loop-shaped superlevel regions are idealized but meaningful models of semantic geometry. \Cref{thm:cycle_fracture} says that hard deletion is topologically brittle: a single dropped segment breaks the loop. This makes precise the intuition that binary masks can fracture spatial semantics rather than perturb them smoothly.

\begin{remark}[Why this is relevant to DropBlock]
DropBlock changes the spatial correlation of the zero set, not the basic hard-mask mechanism. Any block pattern that removes a connected arc from a loop-like superlevel set also destroys its cycle rank. The theorem above therefore isolates the essential topological failure mode already present in the binary-masking mechanism itself.
\end{remark}

\section{Additional experimental details and results}
\label{app:experiments}

\subsection{Experimental setup}
\label{sec:exp_setup}

\paragraph{Datasets.}
We evaluate on ImageNet-1k \citep{deng2009imagenet} (1.28M training images, 50k validation images, 1000 classes). To measure robustness under common corruptions, we additionally use ImageNet-C \citep{hendrycks2019imagenetc}. Our main corruption-shift analysis reports averages over a selected subset of 7 corruption types, each averaged across severities 1--5. To complement the large-scale setting with a fast fine-grained pilot, we also evaluate on Oxford-IIIT Pet \citep{parkhi2012catsdogs}, a 37-class benchmark whose labels are sensitive to shape cues.

\paragraph{Architectures and injection sites.}
Our primary backbone is ResNet-50 \citep{he2016resnet}. We inject the spatial gate at selected residual stages (L2/L3/L4) to study depth-dependent effects. ResNet-50 is also the fairest setting for comparisons to Dropout and DropBlock because those methods are naturally defined on convolutional feature grids. Since \GCh{} acts on a 2D grid wherever such a representation exists, we further evaluate on Swin-T \citep{liu2021swin} to test transfer beyond the primary CNN regime.

\paragraph{Training protocols and reproducibility.}
\textbf{Main ImageNet protocol.} Unless otherwise specified, ImageNet models are trained from scratch for 270 epochs using SGD with momentum 0.9 and weight decay $10^{-4}$, with learning-rate schedules held fixed across methods. Clean ImageNet metrics are reported on the standard validation set, and ImageNet-C metrics are computed from the corresponding trained checkpoints.\\
\textbf{Controlled ablation protocol.} For extensive multi-seed comparisons and strength sweeps, we also use a shorter matched-budget protocol described in the table captions. Within each controlled study, all hyperparameters aside from the noise mechanism are held fixed.\\
\textbf{Oxford-IIIT Pets pilot.} For the fine-grained pilot, we train a ResNet-18 from scratch for 40 epochs using Adam (lr $=10^{-3}$), $224\times224$ inputs, and standard normalization. Results are reported as mean$\pm$std over 3 seeds.

\paragraph{Baselines.}
We compare \GCh{} against Dropout \citep{srivastava2014dropout}, DropBlock \citep{ghiasi2018dropblock}, additive i.i.d.\ Gaussian noise, and additive correlated Gaussian noise. The Gaussian baselines are energy-matched to \GCh{} to separate the effect of structure from the effect of raw magnitude.

\paragraph{Metrics.}
We report Top-1 accuracy, negative log-likelihood (NLL), and expected calibration error (ECE). These metrics capture both predictive performance and probabilistic reliability.

\subsection{Best vs.\ latest checkpoint on clean ImageNet}
\label{app:best_latest}

We compare late-stage (L4) injection at two evaluation points: the best checkpoint observed during training and the final checkpoint.

\paragraph{Protocol note.} These tables come from the full-recipe single-run checkpoint protocol and are therefore complementary to, rather than numerically comparable with, the 3-seed controlled table in the main text. They summarize a separate evaluation slice of the same late-stage setting, whereas the main-text causal-control table reports the matched 3-seed protocol used for mechanism isolation. They are included to show that the late-stage reliability pattern is not specific to one checkpointing convention.

\begin{table}[ht]
\centering
\small
\setlength{\tabcolsep}{5pt}
\begin{tabular}{lccc}
\toprule
Method & Top-1 $\uparrow$ & NLL $\downarrow$ & ECE $\downarrow$ \\
\midrule
None          & 76.41 & 0.96 & 0.082 \\
DropBlock     & 75.86 & 0.99 & 0.085 \\
GCh\ (ours)   & 76.23 & \textbf{0.95} & \textbf{0.076} \\
\bottomrule
\end{tabular}
\caption{\textbf{ImageNet (clean), best checkpoint, L4 injection.}}
\label{tab:imagenet_clean_best}
\end{table}

\begin{table}[ht]
\centering
\small
\setlength{\tabcolsep}{5pt}
\begin{tabular}{lccc}
\toprule
Method & Top-1 $\uparrow$ & NLL $\downarrow$ & ECE $\downarrow$ \\
\midrule
None          & \textbf{76.35} & 0.97 & 0.084 \\
DropBlock     & 75.21 & 1.04 & 0.091 \\
GCh\ (ours)   & 76.18 & \textbf{0.96} & \textbf{0.078} \\
\bottomrule
\end{tabular}
\caption{\textbf{ImageNet (clean), latest checkpoint / final epoch, L4 injection.}}
\label{tab:imagenet_clean_latest}
\end{table}

\paragraph{Takeaway.}
Across both checkpoints, \GCh{} improves reliability relative to both the no-noise baseline and DropBlock while remaining close to the baseline in Top-1 accuracy. The pattern is especially informative at the final epoch, where DropBlock shows a pronounced late-stage degradation whereas \GCh{} does not.

\subsection{Injection depth (L2/L3/L4)}
\label{app:depth}

We apply the same \GCh{} mechanism at different residual stages under the controlled 3-seed protocol.

\begin{table}[ht]
\centering
\begin{tabular}{lccc}
\toprule
Stage & Top-1$\uparrow$ & NLL$\downarrow$ & ECE$\downarrow$\\
\midrule
L2-early & 0.767$\pm$0.001 & 0.918$\pm$0.003 & 0.031$\pm$0.001 \\
L3-mid & 0.765$\pm$0.001 & 0.925$\pm$0.006 & 0.029$\pm$0.002 \\
L4-late & 0.764$\pm$0.001 & 0.934$\pm$0.004 & 0.020$\pm$0.001 \\
\bottomrule
\end{tabular}
\caption{Injection depth ablation for our method at fixed strength $\gamma=0.1$ (3 seeds). Mean$\pm$std.}\label{tab:depth_ablation}
\end{table}

\paragraph{Takeaway.}
Depth induces a clear trade-off: earlier injection favors Top-1 and NLL, while later injection gives the strongest ECE gains. This is why the main paper emphasizes the late-stage regime when discussing reliability.

\subsection{Strength sensitivity}
\label{app:gamma}

We sweep $\gamma\in\{0.03,0.07,0.10,0.18,0.27,0.35\}$ at L4 injection under the controlled protocol.

\begin{table}[ht]
\centering
\begin{tabular}{lccc}
\toprule
$\gamma$ & Top-1$\uparrow$ & NLL$\downarrow$ & ECE$\downarrow$\\
\midrule
0.03 & 0.766$\pm$0.002 & 0.926$\pm$0.006 & 0.027$\pm$0.001 \\
0.07 & 0.765$\pm$0.002 & 0.928$\pm$0.009 & 0.021$\pm$0.001 \\
0.1 & 0.764$\pm$0.001 & 0.934$\pm$0.004 & 0.020$\pm$0.001 \\
0.18 & 0.759$\pm$0.001 & 1.005$\pm$0.006 & 0.076$\pm$0.002 \\
0.27 & 0.667$\pm$0.034 & 1.880$\pm$0.201 & 0.316$\pm$0.017 \\
0.35 & 0.164$\pm$0.017 & 5.204$\pm$0.119 & 0.149$\pm$0.017 \\
\bottomrule
\end{tabular}
\caption{$\gamma$ sweep at late-stage injection (each $\gamma$ retrained). Mean$\pm$std over completed seeds ($n=3$ for all shown).}\label{tab:gamma_sweep}
\end{table}


\paragraph{Takeaway.}
There is a robust small-to-moderate regime in which \GCh{} preserves accuracy and improves reliability. Very large $\gamma$ values cause the expected breakdown from excessive multiplicative perturbation.

\subsection{ImageNet-C full results}
\label{sec:imagenetc_full}

\paragraph{Evaluation protocol and aggregation.}
We report Top-1 accuracy, NLL, and ECE on a selected 7-corruption subset of ImageNet-C. For each corruption type, metrics are averaged across severities 1--5; the reported aggregate numbers then average across the selected corruption types. All ImageNet-C metrics are computed from the same checkpoints used in the clean ImageNet tables, and we report mean$\pm$std over three seeds.

\paragraph{Reading the tables.}
Table~\ref{tab:imagenetc_overall} gives the main late-stage comparison at $g=0.1$. Table~\ref{tab:imagenetc_stage} isolates the effect of injection depth under shift. Table~\ref{tab:imagenetc_gamma} reports strength sensitivity under shift. Table~\ref{tab:imagenetc_breakdown} provides the corruption-wise breakdown.

\paragraph{Overall comparison.}

\begin{table}[ht]
\centering
\small
\setlength{\tabcolsep}{2.8pt}
\renewcommand{\arraystretch}{1.05}

\begin{tabular}{lcccc}
\toprule
Method & $g$ & Top-1$\uparrow$ & NLL$\downarrow$ & ECE$\downarrow$\\
\midrule
None & 0 & 0.382$\pm$0.003 & 3.400$\pm$0.030 & 0.105$\pm$0.002 \\
Dropout & 0.1 & 0.384$\pm$0.003 & 3.317$\pm$0.020 & 0.084$\pm$0.001 \\
DropBlock & 0.1 & 0.390$\pm$0.009 & 3.300$\pm$0.100 & 0.093$\pm$0.004 \\
IID Gaussian & 0.1 & 0.388$\pm$0.003 & 3.316$\pm$0.044 & 0.096$\pm$0.006 \\
Corr. Gaussian & 0.1 & 0.386$\pm$0.002 & 3.340$\pm$0.028 & 0.103$\pm$0.010 \\
\textbf{GCh (ours)} & 0.1 & 0.383$\pm$0.005 & 3.287$\pm$0.064 & 0.056$\pm$0.005 \\
\bottomrule
\end{tabular}
\caption{ImageNet-C overall (mean over 7 corruptions $\times$ 5 severities) for late-stage injection. Mean$\pm$std over 3 seeds.}
\label{tab:imagenetc_overall}
\end{table}

Note that Dropout/DropBlock use their standard hyperparameters (drop probability $p=0.1$) rather than an energy-matched Gaussian strength, while IID/Corr./GCh use matched injected-energy strength for fair mechanism isolation.

\paragraph{Main robustness takeaway.}
Table~\ref{tab:imagenetc_overall} shows that our method substantially improves reliability under distribution shift:
compared to the no-noise baseline, ECE drops from $0.105$ to $0.056$ (a 46\% relative reduction), while NLL also improves.
Crucially, the correlated additive Gaussian baseline (``Corr.\ Gaussian'') remains close to the no-noise baseline in ECE,
supporting our central message that \emph{correlation alone is not sufficient}; the improvement emerges only when correlation is coupled with
a positive, mean-one multiplicative gate (our GCh).

\paragraph{Seed variability (Corr.\ Gaussian).}
We also observe noticeably larger seed-to-seed variability for the correlated additive Gaussian baseline, suggesting that
correlation without multiplicative gating can lead to less consistent behavior under shift.

\begin{table}[ht]
\centering

\begin{tabular}{lccc}
\toprule
Stage & Top-1$\uparrow$ & NLL$\downarrow$ & ECE$\downarrow$\\
\midrule
early & 0.390$\pm$0.002 & 3.314$\pm$0.018 & 0.096$\pm$0.003 \\
mid & 0.393$\pm$0.003 & 3.230$\pm$0.037 & 0.088$\pm$0.004 \\
late & 0.383$\pm$0.005 & 3.287$\pm$0.064 & 0.056$\pm$0.005 \\
\bottomrule
\end{tabular}
\caption{Stage-wise ablation on ImageNet-C for GCh (ours) with $g=0.1$. Mean$\pm$std over 3 seeds.}
\label{tab:imagenetc_stage}
\end{table}

\paragraph{Depth under shift: late-stage helps calibration.}
Table~\ref{tab:imagenetc_stage} demonstrates a consistent depth effect on ImageNet-C: moving injection from early$\rightarrow$mid$\rightarrow$late
monotonically improves calibration (ECE) under shift.
This aligns with the clean-data depth trade-off: late-stage injection perturbs higher-level semantic representations in a structured manner,
yielding stronger reliability gains for comparable accuracy.

\paragraph{Strength sweep under shift.}
\begin{table}[ht]
\centering
\begin{tabular}{lccc}
\toprule
$g$ & Top-1$\uparrow$ & NLL$\downarrow$ & ECE$\downarrow$\\
\midrule
0.03 & 0.388$\pm$0.001 & 3.317$\pm$0.045 & 0.091$\pm$0.007 \\
0.07 & 0.388$\pm$0.007 & 3.304$\pm$0.032 & 0.075$\pm$0.006 \\
0.1 & 0.383$\pm$0.005 & 3.287$\pm$0.064 & 0.056$\pm$0.005 \\
0.18 & 0.385$\pm$0.004 & 3.277$\pm$0.048 & 0.073$\pm$0.001 \\
0.27 & 0.276$\pm$0.038 & 4.266$\pm$0.228 & 0.169$\pm$0.018 \\
0.35 & 0.050$\pm$0.003 & 6.187$\pm$0.030 & 0.043$\pm$0.004 \\
\bottomrule
\end{tabular}
\caption{Strength sweep on ImageNet-C for GCh (late-stage injection). Mean$\pm$std over 3 seeds.}
\label{tab:imagenetc_gamma}
\end{table}
\paragraph{Strength sensitivity and failure modes.}
Table~\ref{tab:imagenetc_gamma} reveals a clear operating regime: moderate strengths ($g\approx 0.07$--$0.18$) retain accuracy while improving reliability,
with the best ECE attained around $g=0.1$ in this sweep.
At overly large strengths ($g\ge 0.27$), accuracy and NLL collapse sharply, indicating destabilization under excessive multiplicative perturbation.
Notably, ECE can appear deceptively small at extreme collapse (e.g., $g=0.35$) because the model becomes severely underconfident;
we therefore treat this region as a failure mode rather than a favorable calibration outcome.

\FloatBarrier

\paragraph{Corruption-wise breakdown.}

\begin{table}[ht]
\centering
\scriptsize
\setlength{\tabcolsep}{2.8pt}
\renewcommand{\arraystretch}{1.05}

\begin{tabular}{lcccc}
\toprule
Corruption & Acc (None) & Acc (Ours) & ECE (None) & ECE (Ours)\\
\midrule
defocus\_blur & 0.402$\pm$0.003 & 0.398$\pm$0.003 & 0.038$\pm$0.002 & 0.039$\pm$0.002 \\
gaussian\_noise & 0.308$\pm$0.004 & 0.310$\pm$0.012 & 0.156$\pm$0.011 & 0.076$\pm$0.011 \\
glass\_blur & 0.273$\pm$0.002 & 0.263$\pm$0.004 & 0.122$\pm$0.003 & 0.075$\pm$0.002 \\
jpeg\_compression & 0.547$\pm$0.002 & 0.550$\pm$0.008 & 0.059$\pm$0.004 & 0.026$\pm$0.001 \\
motion\_blur & 0.396$\pm$0.006 & 0.400$\pm$0.004 & 0.089$\pm$0.006 & 0.049$\pm$0.004 \\
pixelate & 0.462$\pm$0.011 & 0.467$\pm$0.006 & 0.096$\pm$0.004 & 0.047$\pm$0.008 \\
shot\_noise & 0.289$\pm$0.005 & 0.293$\pm$0.011 & 0.171$\pm$0.015 & 0.083$\pm$0.015 \\
\bottomrule
\end{tabular}
\caption{ImageNet-C corruption-wise breakdown (severity-averaged) comparing None vs GCh (ours) at late-stage $g=0.1$. Mean$\pm$std over 3 seeds.}
\label{tab:imagenetc_breakdown}
\end{table}
\paragraph{Which corruptions benefit most.}
Table~\ref{tab:imagenetc_breakdown} shows that the reliability gains are broad-based across corruption types:
the largest ECE reductions occur on noise-type corruptions (gaussian/shot) and compression/pixelation (jpeg/pixelate),
while motion blur also improves.
Defocus blur is largely unchanged in ECE, indicating that not all shifts benefit equally; this heterogeneity is informative and consistent with
the notion that our mechanism primarily targets structured uncertainty arising from local stochastic perturbations rather than all blur kernels uniformly.

\FloatBarrier


\subsection{Oxford-IIIT Pets (Fine-grained) Results}
\label{sec:pets_results}

\paragraph{Protocol (multi-seed, selection on validation only).}
We follow a scientific multi-seed protocol on Oxford-IIIT Pets with a fixed train/val split (from \texttt{trainval}).
For each method/seed, we select the checkpoint that minimizes validation NLL, using validation ECE as a tie-break when NLLs are nearly identical,
and then report \emph{test} Top-1, NLL, and ECE for the selected checkpoint.
ECE is computed with 15 equal-width confidence bins.

\paragraph{Strength parameter \(g\) across methods.}
To align notation with the main paper, we use a single ``strength'' symbol \(g\) across all methods.
For \textbf{GCh (ours)}, \(g\) is the multiplicative-gate strength used in the exponential gate.
For Dropout/DropBlock, \(g\) corresponds to the drop probability \(p\) (here \(p=0.1\)); for ``None'' we set \(g=0\).

\begin{table}[t]
\centering
\scriptsize
\setlength{\tabcolsep}{3.5pt}
\renewcommand{\arraystretch}{1.05}
\begin{tabular}{lcccc}
\toprule
Method & $g$ & Top-1$\uparrow$ & NLL$\downarrow$ & ECE$\downarrow$\\
\midrule
None         & 0   & 0.9009$\pm$0.0044 & 0.3669$\pm$0.0016 & 0.0325$\pm$0.0044 \\
Dropout ($p{=}0.1$)   & 0.1 & 0.8957$\pm$0.0007 & 0.4246$\pm$0.0131 & 0.0503$\pm$0.0007 \\
DropBlock ($p{=}0.1$) & 0.1 & 0.9002$\pm$0.0027 & 0.3669$\pm$0.0007 & 0.0317$\pm$0.0053 \\
\textbf{GCh (ours)}   & 0.1 & \textbf{0.9010$\pm$0.0023} & \textbf{0.3627$\pm$0.0039} & \textbf{0.0302$\pm$0.0037} \\
\bottomrule
\end{tabular}
\caption{Oxford-IIIT Pets test performance (ResNet-18, 224$\times$224, late-stage injection; mean$\pm$std over 3 seeds).
The strength parameter \(g\) is shared across rows for compactness; for Dropout/DropBlock it corresponds to the drop probability \(p\) (see text).}
\label{tab:pets_main_aligned}
\end{table}

\paragraph{Takeaway.}
On this fine-grained dataset, \textbf{GCh} achieves the best (lowest) NLL and ECE at essentially unchanged accuracy relative to the strong baselines,
indicating that the reliability gains are not specific to ImageNet/ImageNet-C.

\begin{table}[t]
\centering

\setlength{\tabcolsep}{3.5pt}
\renewcommand{\arraystretch}{1.05}
\begin{tabular}{lccc}
\toprule
$g$ & Top-1$\uparrow$ & NLL$\downarrow$ & ECE$\downarrow$\\
\midrule
0.1 & \textbf{0.9010$\pm$0.0023} & \textbf{0.3627$\pm$0.0039} & \textbf{0.0302$\pm$0.0037} \\
0.5 & 0.8989$\pm$0.0031 & 0.3660$\pm$0.0038 & 0.0314$\pm$0.0053 \\
1.0 & 0.8978$\pm$0.0030 & 0.3661$\pm$0.0024 & 0.0323$\pm$0.0037 \\
\bottomrule
\end{tabular}
\caption{GCh strength sweep on Oxford-IIIT Pets (test; mean$\pm$std over 3 seeds). As in ImageNet/ImageNet-C, moderate strengths are best; larger strengths do not yield further gains.}
\label{tab:pets_gsweep_aligned}
\end{table}

\FloatBarrier

\section{Additional theory details}
\label{App:theory}

\subsection[Operational meaning of Theorem~\ref{thm:maxent-gff-chaos}]{Operational meaning of \texorpdfstring{\Cref{thm:maxent-gff-chaos}}{Theorem~\ref{thm:maxent-gff-chaos}}}
\Cref{thm:maxent-gff-chaos} gives the canonical exact construction:
\begin{enumerate}[leftmargin=1.5em]
\item sample a GFF log-field $\psi$ with covariance $(\beta L_U)^{-1}$;
\item exponentiate with exact Wick normalization to obtain a positive mean-one multiplicative gate.
\end{enumerate}
Once the operator, gauge convention, and energy budget are fixed, the remaining reported strength parameter in the experiments is $\gamma$.

\subsection{Mean-one normalization choices}
\label{Norm}

The exact mean-one gate requires the variance map $v(x)=\Var(\psi(x))=C(x,x)$. On a finite Dirichlet grid, $v(x)$ is not spatially constant. Two practical normalization choices are standard.

\begin{enumerate}[leftmargin=1.5em]
\item \textbf{Exact Wick normalization.}
Precompute
\[
v(i,j)=\frac{1}{\beta}\sum_{k=1}^{H}\sum_{\ell=1}^{W}
\frac{\tilde e_{k,\ell}(i,j)^2}{\lambda_{k,\ell}},
\]
where $\tilde e_{k,\ell}$ denotes the orthonormal sine basis. Then use
\[
\xi^{\mathrm{ex}}_\gamma(x)
=
\exp\!\Big(\gamma\psi(x)-\frac{\gamma^2}{2}v(x)\Big).
\]
This is the exact object in the theory and preserves $\E[\xi^{\mathrm{ex}}_\gamma(x)]=1$ sitewise.

\item \textbf{Sample-wise mean-one normalization.}
Compute $G(x)=\exp(\gamma\psi(x))$ and normalize by the spatial mean:
\[
\xi^{\mathrm{sw}}_\gamma(x)
=
\frac{G(x)}{\frac{1}{|U|}\sum_{y\in U}G(y)}.
\]
This guarantees unit spatial average per sample and is often convenient in optimization. It is the implementation used in the main experiments unless otherwise noted.
\end{enumerate}

\subsection{Implementation notes}
\begin{enumerate}[leftmargin=1.5em]
\item A single gate may be shared across channels, or independent gates may be sampled channel-wise.
\item In multi-resolution architectures, the gate can be sampled directly at the feature resolution of the target layer or sampled at a base resolution and then resized.
\item At inference time, noise can be disabled by setting $\xi\equiv 1$.
\end{enumerate}

\end{document}